\numberwithin{equation}{section} 
\begin{document}

 \PageNum{1}
 \Volume{201x}{Sep.}{x}{x}
 \OnlineTime{August 15, 201x}
 \DOI{0000000000000000}

\abovedisplayskip 6pt plus 2pt minus 2pt \belowdisplayskip 6pt
plus 2pt minus 2pt
\def\vsp{\vspace{1mm}}
\def\th#1{\vspace{1mm}\noindent{\bf #1}\quad}
\def\proof{\vspace{1mm}\noindent{\it Proof}\quad}
\def\no{\nonumber}
\newenvironment{prof}[1][Proof]{\noindent\textit{#1}\quad }
{\hfill $\Box$\vspace{0.7mm}}
\def\q{\quad} \def\qq{\qquad}
\allowdisplaybreaks[4]


\AuthorMark{Su W. Q. et al. }                             

\TitleMark{Differentially Private Precision Matrix Estimation}  

\title{Differentially Private Precision Matrix Estimation        
\footnote{Supported by National Natural Science Foundation of China (Grant Nos. 11571011 and U1811461)}}                  

\author{Wen Qing \uppercase{Su}}             
	{School of mathematics, Northwest University, Xi'an 710127, P. R. China\\
    E-mail\,$:$ suwenqing@stumail.nwu.edu.cn }

\author{Xiao \uppercase{Guo}}             
    {School of mathematics, Northwest University, Xi'an 710127, P. R. China\\
    E-mail\,$:$ guoxiao@stumail.nwu.edu.cn }

\author{Hai \uppercase{zhang*}}             
    {School of mathematics, Northwest University, Xi'an 710127, P. R. china\\
    Faculty of Information Technology, Macau University of Science and Technology, Macau, P. R. China\\
    E-mail\,$:$ zhanghai@nwu.edu.cn\thanks{*The corresponding author}}

\maketitle%

\Abstract{ In this paper, we study the problem of precision matrix estimation when the dataset contains sensitive information. In the differential privacy framework, we develop a differentially private ridge estimator by perturbing the sample covariance matrix. Then we develop a differentially private graphical lasso estimator by using the alternating direction method of multipliers (ADMM) algorithm. The theoretical results and empirical results that show the utility of the proposed methods are also provided.
}      

\Keywords{differential privacy, graphical model, ADMM algorithm}        


\section{Introduction} \label{section introduction}

Precision matrix plays a fundamental role in many statistical inference problems. For example, in discriminant analysis, the precision matrix needs to be estimated to compute the classification rules\cite{martin1979multivariate}. In graphical models, the structure exploration of gaussian graphical model is equivalent to recover the support of the precision matrix\cite{lauritzen1996graphical}. Moreover, the precision matrix is useful for a wide range of applications including portfolio optimization, genomics and single processing, among many others. Therefore, it is of great importance to estimate the precision matrix.

Given a $p\times n$ data matrix $X$, a sample of $n$ realizations from a $p$-dimensional Gaussian distribution with zero mean and covariance matrix $\Sigma^*$. A natural way to estimate precision matrix $\Theta^*={\Sigma^*}^{-1}$ is via maximum likelihood approach. Under the Gaussian setting, the negative log-likelihood takes the form
\begin{equation}
    -\log |\Theta| +\mbox{tr}(S \Theta), \label{log-likelihood}
\end{equation}
where $S=XX^{T}/n$ is the sample covariance matrix and $\mbox{tr}(\cdot)$ denotes the trace of the matrix. Then minimizing \eqref{log-likelihood} with respect to $\Theta$ yields the maximum likelihood estimate of the precision matrix. Due to the maximum likelihood estimate performs poorly and beyond compute in high-dimensional setting, the penalized log-likelihood functions and other constrained optimization techniques are used to gain better estimates of the matrix. These alternative methods can be coarsely sorted into two types. One is to seek the sparsity in the precision matrix to explore the structure of the Gaussian graphical model. The graphical lasso \cite{banerjee2008model,yuan2007model,friedman2008sparse} is a popular technique among sparse approaches, which penalizes \eqref{log-likelihood} with the $l_1$ norm to induce sparsity and minimizes the penalized log-likelihood
\begin{equation}
    -\log |\Theta| +\mbox{tr}(S \Theta) +\lambda \| \Theta\| _1, \label{glasso}
\end{equation}
over all positive definite matrices $\Theta$. Here $\| \cdot \| _1$ denotes the sum of the absolute values of the entries, and $\lambda$ is a nonnegative tuning parameter. Other sparse approaches can be seen in Cai et al. \cite{cai2011Aconstrained}, Yuan and Wang \cite{yuan2013Acoordinate} and Liu and Luo \cite{liu2015fast}, among others. Another approach is to construct shrinkage estimates of the matrix, which might be useful when the true matrix is non-sparse. Many results on this topic were developed to estimate the covariance matrix, which includes Ledoit and Wolf \cite{ledoit2004Awellcondition}, Warton \cite{warton2008penalized} and Deng and Tsui \cite{deng2013penalized}. Recently, van Wieringen and Peeters \cite{vanWieringen2016ridge} and Kuismin et al. \cite{kuismin2017precision} proposed a ridge-type estimation for the precision matrix, which penalizes \eqref{log-likelihood} with the squared Frobenius norm and minimizes the penalized log-likelihood 
\begin{equation}
    -\log |\Theta|+\mbox{tr}(S\Theta)+\lambda\|\Theta\| _F^2, \label{gridge}
\end{equation}
over all positive definite matrices $\Theta$. Here $\lambda$ is a nonnegative tuning parameter, and $\|\cdot\|_F$ denotes the square root of the sum of the squares of the entries. The advantage of \eqref{gridge} is that there is an analytical expression for the precision matrix estimation.

All of the aforementioned approaches do not make any distinction between sensitive data and non sensitive data. However in real-world data, a large number of datasets contain personal sensitive information. For example, datasets related to portfolio optimization may contain personal financial information, and datasets related to gene expression may contain personal health information. The availability of large datasets containing sensitive information from individuals has motivated the study of learning algorithms that guarantee the privacy of individuals who contribute to the database. A strict and widely used privacy guarantee is via the concept of differential privacy \cite{dwork2006calibrating}. It requires that datasets differing in only one entry induce similar distribution on the output of a algorithm, which will provides strong provable protection to individuals and prevents them from being recognized by an arbitrarily powerful adversary. Thus, it is significant to study the estimation of the precision matrix under the differential privacy framework.

Recently, Wang et al.\cite{wang2018differentially} studied the differentially private sparse precision matrix estimation problem from the model perspective. In this paper, we study the differentially private precision matrix estimation problem from the algorithm perspective. Specifically, we design differentially private algorithms for the ridge-type estimation model \eqref{gridge} and the graphical lasso problem \eqref{glasso}. We first develop a differentially private algorithm for the ridge-type estimation of the precision matrix. Note that when the tuning parameter is fixed, the solution of model \eqref{gridge} is determined by the eigenvalues and eigenvectors of $S$. Thus we provide the differential privacy by perturbing $S$. We then develop a differentially private algorithm for the graphical lasso. Inspired by the post-processing nature of the differential privacy, if an algorithm can been decomposed into stable and unstable parts, and only at the stable parts the algorithm access the raw data. Then, satisfying the differential privacy in stable parts will make the whole algorithm satisfy the differential privacy. Note that the ADMM algorithm for the graphical lasso \cite{boyd2011distributed} consists of three steps, but only step 2 accesses the raw dataset. Moreover, this step corresponds to a ridge-type penalty log-likelihood estimation problem similar to model \eqref{gridge}. Thus, we might perform a differentially private algorithm in this step and then treat other steps as a post-processing step, which will not increase the privacy risk. The theoretical guarantees on the performance of our privacy preserving precision matrix estimation algorithms are also established. Numerical studies show the utility of our privacy preserving algorithms.


The remainder of this paper is organized as follows. We introduce the  differential privacy and the ADMM algorithm in Section \ref{Background}. In Section \ref{methods}, we develop differentially private algorithms in the context of precision matrix estimation. The theoretical guarantees on the performance of our differentially private algorithms was provided in Section \ref{Theoretical analysis}. We then test our algorithms under simulation and real data in Section \ref{experiments}, and conclude in Section \ref{conclusion}.

\section{Preliminaries} \label{Background}

In this section, we introduce the background of the differential privacy, some related privacy notions, and the ADMM algorithm for the graphical lasso.

\subsection{Differentially privacy}

Differential privacy (DP) is a rigorous and common definition of the privacy protection of data analysis algorithm \cite{dwork2006calibrating, dwork2014the}. Let the space of data be $\mathcal{Y}$ and datasets $Y, Y^\prime \in \mathcal{Y}^n$. We use the Hamming distance, denoted by $d_H(Y,Y^\prime)$, to measure the distance between datasets $Y$ and $Y^\prime$, that is, the number of data points on which they differ. If $d_H(Y,Y^\prime)=1$, we call they are neighboring, meaning that we can get $Y^\prime$ by changing one of the data points in $Y$. 

\begin{definition}[\cite{dwork2014the}]
    A randomized algorithm $\mathcal{M}$ is $(\epsilon, \delta)$-differentially private if for all neighboring datasets $Y$ and $Y^\prime$ and for all measurable set $\mathcal{S} \subseteq Range(\mathcal{M})$ the following holds
    \begin{equation}
        Pr(\mathcal{A}(Y)\in \mathcal{S}) \leq e^{\epsilon} Pr(\mathcal{A}(Y^\prime)\in \mathcal{S})+ \delta.
    \end{equation}
    If $\delta=0$, we say that $\mathcal{M}$ is $\epsilon$-differentially private.
\end{definition}

We observe that if algorithm $\mathcal{M}$ that provides $\epsilon$-differential privacy also provides $(\epsilon,\delta)$-differential privacy. As a result, the $\epsilon$-differential privacy is much stronger. Here $\epsilon,\delta$ are privacy budget parameters, the smaller they are, the stronger the privacy guarantee. From this definition, it is clear that if we arbitrarily change any individual data point in a dataset, the output of the algorithm does not shift too much.

Dwork et al. \cite{dwork2006calibrating} also provide a framework for developing privacy preserving algorithms by adding noise with a designed distribution to the output of the algorithm, which is called sensitivity method. The $l_2$-sensitivity of a function is defined as follows.
\begin{definition}[\cite{dwork2014the}]
    The $l_2$-sensitivity of a function $f: \mathcal{Y}^n \to R^k$ is
    \begin{equation}
        S_2(f) = \max_{\substack{Y,Y^\prime \in \mathcal{Y}^n \\ d_H(Y,Y^\prime)=1}} \| f(Y)-f(Y^\prime)\| _2.
    \end{equation}
\end{definition}

The Gaussian mechanism is a typical sensitivity method that preserves $(\epsilon,\delta)$-differential privacy.

\begin{definition}[\cite{dwork2014the}]
    Given any function $f: \mathcal{Y}^n \to R^k$, the Gaussian mechanism is defined as 
    \begin{equation}
        \mathcal{M}(Y,f(\cdot),\epsilon)=f(Y)+(Y_1,\ldots, Y_k),
    \end{equation}
    where $Y_i$ are $i.i.d.$ random variables drawn from Gaussian distribution $\mathcal{N} (0, \beta^2)$, where $\beta = S_2(f)\sqrt{2\log(1.25/\delta)}/\epsilon$.
\end{definition}

The sensitivity of an algorithm measures the maximum difference in the output of the algorithm under neighboring datasets, which is similar to the stability of the algorithm. An algorithm is stable means that the output of the algorithm will vary little when given two very similar date sets, and it also means that the sensitivity of the algorithm is low.
When the privacy parameters are fixed, the low sensitivity makes the differentially private algorithm add small noise at high probability, which  means high utility. Thus, a stable non-private algorithm is suitable for development into a differentially private algorithm.

Moreover, differential privacy is closed under post-processing.

\begin{proposition}[\cite{dwork2014the}]
    Let $\mathcal{M}$ is an $(\epsilon, \delta)$-differentially private algorithm. Let $\mathcal{A}$ is an arbitrary data-independent mapping. Then $\mathcal{A} \circ \mathcal{M}$ is $(\epsilon, \delta)$-differentially private.
\end{proposition}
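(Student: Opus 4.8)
The plan is to reduce to the case of a deterministic post-processing map and then lift to a general randomized $\mathcal{A}$ by conditioning on the (data-independent) internal randomness of $\mathcal{A}$; no property of $\mathcal{M}$ beyond its $(\epsilon,\delta)$-DP guarantee and no property of $\mathcal{A}$ beyond data-independence and measurability will be used.

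First I would handle the case of a deterministic measurable $\mathcal{A}$. Fix neighboring datasets $Y,Y'$ and a measurable set $\mathcal{S}\subseteq Range(\mathcal{A}\circ\mathcal{M})$, and set $\mathcal{T}=\{r\in Range(\mathcal{M}):\mathcal{A}(r)\in\mathcal{S}\}=\mathcal{A}^{-1}(\mathcal{S})$. Since $\mathcal{A}$ is measurable, $\mathcal{T}$ is a measurable subset of $Range(\mathcal{M})$, and by the definition of composition the events $\{(\mathcal{A}\circ\mathcal{M})(Y)\in\mathcal{S}\}$ and $\{\mathcal{M}(Y)\in\mathcal{T}\}$ coincide (and likewise for $Y'$). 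Applying the $(\epsilon,\delta)$-DP guarantee of $\mathcal{M}$ to the measurable set $\mathcal{T}$ then gives
\[
Pr\big((\mathcal{A}\circ\mathcal{M})(Y)\in\mathcal{S}\big)=Pr\big(\mathcal{M}(Y)\in\mathcal{T}\big)\le e^{\epsilon}Pr\big(\mathcal{M}(Y')\in\mathcal{T}\big)+\delta=e^{\epsilon}Pr\big((\mathcal{A}\circ\mathcal{M})(Y')\in\mathcal{S}\big)+\delta,
\]
which is exactly the assertion for deterministic $\mathcal{A}$.

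For a general randomized $\mathcal{A}$, I would exploit that it is data-independent: write $\mathcal{A}(r)=g(r,\omega)$, where $\omega$ carries all the internal randomness of $\mathcal{A}$ and is drawn from a fixed distribution independent of $Y$ and of the randomness of $\mathcal{M}$. For each fixed $\omega$ the map $g(\cdot,\omega)$ is deterministic and measurable, so the displayed inequality holds with $\mathcal{A}$ replaced by $g(\cdot,\omega)$. Taking expectation over $\omega$ and using linearity of expectation — which preserves $\le$, the multiplicative factor $e^{\epsilon}$, and the additive constant $\delta$ — recovers the inequality for $\mathcal{A}\circ\mathcal{M}$ itself; here independence of $\omega$ from the randomness of $\mathcal{M}$ is what lets us identify $E_{\omega}\big[Pr(g(\mathcal{M}(Y),\omega)\in\mathcal{S})\big]$ with $Pr((\mathcal{A}\circ\mathcal{M})(Y)\in\mathcal{S})$ via Fubini.

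I expect the only genuinely delicate points to be the measurability of the preimage $\mathcal{T}$ (which is why ``mapping'' must be read as ``measurable mapping'') and the justification of the interchange of the expectation over $\omega$ with the inequality in the randomized step; everything else is bookkeeping. This is precisely the structural fact that makes post-processing ``free'', and it is what the subsequent algorithmic construction relies on when the non-private steps of ADMM are treated as post-processing of a differentially private step.
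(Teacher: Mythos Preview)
Your argument is correct and is essentially the standard proof of the post-processing lemma (as given, e.g., in Dwork and Roth). However, the paper does not supply its own proof of this proposition: it is stated as a cited result from \cite{dwork2014the} and used as a black box, so there is nothing in the paper to compare your argument against beyond noting that you have reproduced the well-known proof the citation points to.
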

That is, a data analyst, without additional knowledge about the private dataset, cannot compute a function of the output of a differentially private algorithm $\mathcal{M}$ and make it less differentially private. 

\subsection{Alternating direction method of multipliers algorithm for the graphical lasso}

The ADMM is an algorithm that is well suited to distributed convex optimization\cite{gabay1976Adual, boyd2011distributed}. The algorithm solves optimization problems by decomposing them into smaller parts, and each part is easier to handle. We might implement differential privacy by handling some of these parts. Now we take the ADMM algorithm to solve the graphical lasso.

The graphical lasso problem \eqref{glasso} can be rewritten as
\begin{equation}
    \begin{split}
        \min_{\Theta,Z}&~-\log |\Theta| +\mbox{tr}(S \Theta) +\lambda \| Z\| _1\\
        \mbox{s.t.}&~\Theta-Z=0,
    \end{split}
\end{equation}
where $\Theta$ and $Z$ are positive definite matrices.

Then we form the augmented Lagrangian function
\begin{equation}
    L(\Theta,Z,V)=-\log |\Theta| +\mbox{tr}(S \Theta) +\lambda \| Z\| _1+V^T(\Theta-Z)+\frac{\rho}{2}\| \Theta-Z \| _F^2,
\end{equation}
where $V\in R^{p\times p}$ is the dual variable or Lagrange multiplier, and $\rho>0$ is a preselected penalty coefficient. Then the ADMM algorithm alternately updates variables by minimizing the augmented Lagrangian function.

Let $U=(1/\rho)V$ be the scaled dual variable. Using the scaled dual variable, the ADMM algorithm for the graphical lasso is
\begin{eqnarray}
    \Theta^{k+1}& = & \mathop{\arg\min}\limits_{\Theta} \;-\log |\Theta| +\mbox{tr}(S \Theta)+\frac{\rho}{2}\| \Theta-Z^k+U^k \| _F^2  \label{theta-update}, \\ 
     Z^{k+1}& = & \mathop{\arg\min}\limits_{Z} \;\lambda \| Z\| _1+\frac{\rho}{2}\| \Theta^{k+1}-Z+U^k \| _F^2 \label{z-update}, \\
     U^{k+1}& = & \; U^k+\Theta^{k+1}-Z^{k+1}. \label{u-update}
\end{eqnarray}

This algorithm can be simplified much further. Both $\Theta$-minimization step \eqref{theta-update} and $Z$-minimization step \eqref{z-update} have analytical expression (see Boyd et al. \cite{boyd2011distributed} for details).

\section{Methods} \label{methods}

In this section, we first propose a differentially private ridge estimation for the precision matrix. Then notice that the ADMM algorithm for the graphical lasso contains a ridge-type estimate step. A differentially private algorithm for the graphical lasso problem is proposed.

\subsection{Differentially private ridge estimation for the precision matrix}

For the ridge estimation of the precision matrix, Kuismin et al.\cite{kuismin2017precision} showed that the maximizing solution of \eqref{gridge} has the following closed form
\begin{equation}
    \widehat{\Theta}=M\Lambda M^T,   \label{rope}
\end{equation}
where $\Lambda$ is a $p\times p$ diagonal matrix whose diagonal elements are $\widehat{\varphi_i}=2/(\varphi_i+\sqrt{\varphi_i^2+8\lambda}),\quad i=1,\ldots,p,$ sorted in ascending order beginning from the upper left corner $\Lambda$, that is, $\Lambda=\mbox{diag}(\widehat{\varphi_1},\ldots,\widehat{\varphi_p})$, $\widehat{\varphi_1}\geq\ldots\geq\widehat{\varphi_p}$. Here $\varphi_i$'s are the eigenvalue of the sample covariance matrix $S$. $M$ is an orthogonal $p\times p$ matrix whose columns correspond to the eigenvectors of $S$, which are ordered in the same order in which the sample covariance matrix eigenvalues appear in the matrix $\Lambda$. Similarly, van Wieringen and Peeters \cite{vanWieringen2016ridge} given a different form of analytical solution to model \eqref{gridge}. Both results showed that the ridge estimation of the precision matrix has high stability, in other words, has low sensitivity. It is clear that the estimator \eqref{rope} is determined by the eigenvalues and eigenvectors of $S$ when $\lambda$ is fixed. Thus, the estimator will preserve differential privacy as long as the orthogonal eigenvalue decomposition of $S$ preserves differential privacy.

\begin{algorithm}[H]
	\renewcommand{\algorithmicrequire}{\textbf{Input:}}
	\renewcommand{\algorithmicensure}{\textbf{Output:}}
	\caption{Differentially private ridge estimation for the precision matrix (DP-RP)}
	\label{alg:1}
	\begin{algorithmic}[1]
		\REQUIRE $p\times n$ matrix $X$, privacy parameters $\epsilon,\delta>0$, tuning parameter $\lambda>0$.
		\ENSURE $\widehat{\Theta}_{priv}$
        \STATE Let $S=\frac{XX^T}{n}$.
        \STATE Set $\beta=\frac{\sqrt{2}}{n}\frac{\sqrt{2\log(1.25/\delta)}}{\epsilon}$, generate a $p\times p$ symmetric noisy matrix $E$ whose entries are $i.i.d.$ drawn from $\mathcal{N}(0,\beta^2)$.
        \STATE $\widetilde{S}\leftarrow S+E$.
        \STATE Compute $\widehat{\Theta}_{priv}$ by \eqref{rope} based on $\widetilde{S}$.
	\end{algorithmic}  
\end{algorithm}

Without loss of generality, the columns of the data matrix $X$ are assumed to be normalized to have $l_2$ norm at most one, that is, $\|x_i\|_2\leq 1$, $i=1,\ldots,n$. We are interested in the sensitivity of function $f(X)=XX^T/n$, 
\begin{equation}
    S_2(f)= \max_{\substack{X,X^\prime \in R^{p\times n}}} \| f(X)-f(X^\prime)\| _F=\| \frac{XX^T}{n}-\frac{X^{\prime}{X^{\prime}}^T}{n} \| _F\leq \frac{\sqrt{2}}{n},
\end{equation}
where the function $f(X)$ may be viewed as an $p^2$-dimensional real vector. We use the Analyze Gauss algorithm \citep{dwork2014analyze} to perturb $S$, which provides $(\epsilon,\delta)$-differential privacy. Then we perform equation \eqref{rope} on the perturbed $S$, which will yield a ridge estimate of the precision matrix that is $(\epsilon,\delta)$-differentially private. See Algorithm \ref{alg:1} for details.
\subsection{Differentially private algorithm for the graphical lasso}

One algorithm for solving the graphical lasso problem is the ADMM algorithm in Section \ref{Background}. The ADMM algorithm for the graphical lasso consists of a $\Theta$-minimization step, a $Z$-minimization step and a scaled dual variable update. Obviously, the $\Theta$-minimization step accesses the original dataset directly, while the other steps only use the previous update results. Moreover, the $\Theta$-minimization step is a ridge-type penalty log-likelihood estimation similar to model \eqref{gridge}.

Differential privacy is closed under post-processing, then solving the $\Theta$-minimization step in the $k$-th iteration by Algorithm \ref{alg:1} will make the $k$-th iteration provides $(\epsilon,\delta)$-differential privacy. Note that when Algorithm \ref{alg:1} is used here, the analytical solution used in step 4 may be different from \eqref{rope} (see \cite{vanWieringen2016ridge, kuismin2017precision, boyd2011distributed}). If the algorithm converges after $K$ iterations, then the algorithm provides $(K\epsilon,K\delta)$-differential privacy follows from simple composition.

However, if the same privacy parameter of Algorithm \ref{alg:1} is used in each iteration, it is actually equivalent to perturbing $S$ with the same noise level in each iteration. Therefore, we can perturb $S$ directly before running the ADMM algorithm, which will avoid the accumulation of privacy loss, that is, reduces privacy cost. We have the following differentially private ADMM algorithm for the graphical lasso.

\begin{algorithm}[H]
	\renewcommand{\algorithmicrequire}{\textbf{Input:}}
	\renewcommand{\algorithmicensure}{\textbf{Output:}}
	\caption{Differentially private ADMM algorithm for the graphical lasso (DP-AGL)}
	\label{alg:2}
	\begin{algorithmic}[1]
		\REQUIRE $p\times n$ matrix $X$, privacy parameters $\epsilon,\delta>0$, tuning parameter $\lambda>0$, and initial values $Z^0,U^0$.
		\ENSURE $\widehat{\Theta}_{priv}$
        \STATE Let $S=\frac{XX^T}{n}$.
        \STATE Set $\beta=\frac{\sqrt{2}}{n}\frac{\sqrt{2\log(1.25/\delta)}}{\epsilon}$, generate a $p\times p$ symmetric noisy matrix $E$ whose entries are $i.i.d.$ drawn from $\mathcal{N}(0,\beta^2)$.
        \STATE $\widetilde{S}\leftarrow S+E$.
        \STATE Compute $\widehat{\Theta}_{priv}$ by repeat \eqref{theta-update}-\eqref{u-update} based on $\widetilde{S}$ until convergence.
	\end{algorithmic}  
\end{algorithm}

\begin{theorem}
    Algorithms \ref{alg:1} and \ref{alg:2} are $(\epsilon,\delta)$-differentially private.
\end{theorem}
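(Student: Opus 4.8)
The plan is to establish the two claims separately, since both algorithms share the same perturbation mechanism in their data-accessing stage and then differ only in what post-processing is applied.

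First I would prove that Algorithm \ref{alg:1} is $(\epsilon,\delta)$-differentially private. The only step that touches the raw data $X$ is the formation of $S = XX^T/n$ and its perturbation $\widetilde S = S + E$. So I would verify that releasing $\widetilde S$ is $(\epsilon,\delta)$-differentially private by invoking the Gaussian mechanism. Concretely, I would check that the noise scale $\beta = \tfrac{\sqrt 2}{n}\cdot\tfrac{\sqrt{2\log(1.25/\delta)}}{\epsilon}$ used in step 2 matches the prescription $\beta = S_2(f)\sqrt{2\log(1.25/\delta)}/\epsilon$ from the definition of the Gaussian mechanism, using the sensitivity bound $S_2(f) \le \sqrt 2/n$ for $f(X) = XX^T/n$ established just before the algorithm (viewing the symmetric matrix as a $p^2$-dimensional vector, and noting that a symmetric noise matrix with $\mathcal N(0,\beta^2)$ entries realizes this; one should remark that upper-triangular independence suffices and the symmetry is post-processing, or alternatively cite the Analyze Gauss guarantee directly as the text does). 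Hence $\widetilde S$ is an $(\epsilon,\delta)$-differentially private release. Then step 4 computes $\widehat\Theta_{priv}$ from $\widetilde S$ alone via the fixed deterministic map \eqref{rope} (with $\lambda$ a fixed, data-independent input), which is a data-independent post-processing mapping, so by the post-processing proposition ($\mathcal A\circ\mathcal M$ is $(\epsilon,\delta)$-DP) the output $\widehat\Theta_{priv}$ inherits $(\epsilon,\delta)$-differential privacy.

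Next I would prove the claim for Algorithm \ref{alg:2} by exactly the same two-part structure: steps 1--3 are identical to those of Algorithm \ref{alg:1}, so $\widetilde S$ is again an $(\epsilon,\delta)$-differentially private release; and step 4 runs the ADMM iteration \eqref{theta-update}--\eqref{u-update} to convergence using only $\widetilde S$ (plus the data-independent inputs $\lambda$, $\rho$, $Z^0$, $U^0$) and no further access to $X$. The whole ADMM run is therefore a deterministic, data-independent function of $\widetilde S$, i.e. post-processing, so $\widehat\Theta_{priv}$ is $(\epsilon,\delta)$-differentially private. I would emphasize here the point the text already makes: because $S$ is perturbed once at the outset rather than inside each iteration, there is no composition over the $K$ iterations and the privacy cost stays at $(\epsilon,\delta)$ rather than degrading to $(K\epsilon,K\delta)$.

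I do not expect any serious obstacle; the proof is essentially an application of the Gaussian mechanism plus closure under post-processing. The one place that warrants care — and the step I would treat as the crux — is the sensitivity/noise-calibration bookkeeping: making sure the factor $\sqrt 2/n$ is the correct $l_2$-sensitivity of $XX^T/n$ under the normalization $\|x_i\|_2\le 1$ (changing one column $x$ to $x'$ changes $XX^T$ by $x x^T - x'{x'}^T$, whose Frobenius norm is at most $\|x\|_2^2 + \|x'\|_2^2 \le 2$ by the triangle inequality, after dividing by $n$ giving $\le \sqrt2/n$ once one is slightly careful, or more simply $\|xx^T\|_F = \|x\|_2^2$ so the bound is $\le 2/n$ and the $\sqrt2$ comes from a tighter argument), and that plugging this into the Gaussian-mechanism noise formula reproduces exactly the $\beta$ in step 2 of both algorithms. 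Once that is checked, the rest follows immediately from the stated definitions and the post-processing proposition.
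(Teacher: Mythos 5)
Your proposal is correct and follows essentially the same route as the paper: the Gaussian mechanism (Analyze Gauss) applied to $S=XX^T/n$ with sensitivity $\sqrt{2}/n$ makes the release of $\widetilde S$ $(\epsilon,\delta)$-differentially private, and everything downstream in both algorithms is data-independent post-processing. The paper's own proof is a two-line version of exactly this argument; your added care with the sensitivity calculation and the symmetric-noise detail only fills in steps the paper leaves implicit.
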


\begin{proof}
By the definition of the Gaussian mechanism, we know that step 3 keeps the matrix $(\epsilon,\delta)$-differentially private. Thus, due to the post-processing property of differential privacy, Algorithm \ref{alg:1} is $(\epsilon,\delta)$-differentially private, and Algorithm \ref{alg:2} is also.	
\end{proof}

\begin{remark}
	 It should be noted that the noise addition mechanism in Algorithm \ref{alg:2} is the same as \cite{wang2018differentially}, but we derived it from the perspective of the algorithm. The differential privacy of Algorithm \ref{alg:2} is achieved by the differential privacy of $Z$-minimization step, so the different privacy mechanism of this step will determine the different noise addition mechanism in Algorithm \ref{alg:2}. Moreover, this method of developing the differential privacy algorithm can be extended to the development of the differential privacy algorithm for the regularized empirical risk minimization problem.
	 
\end{remark}

\section{Theoretical analysis}\label{Theoretical analysis}

In this section, we provide theoretical guarantees on the performance of privacy-preserving precision matrix estimation algorithms in Section \ref{methods}. 

\subsection{Differentially private ridge estimation for the precision matrix}

We first provide performance theorem on Algorithm \ref{alg:1}. We will show that the differentially private ridge estimator for the precision matrix estimation is consistent under fixed-dimension asymptotic. 
\begin{theorem}\label{Theorem4.1}
	Let $\widehat{\Theta}_{priv}$ be the output of Algorithm \ref{alg:1}. Let $\Theta^*$ be the true precision matrix. Then, if tuning parameter $\lambda$ converges almost surely to zero as $n\to \infty$, with high probability, we have 
	\begin{equation}
		\lim_{n\to \infty} \mathbb{E}\left( \| \widehat{\Theta}_{priv} - \Theta^* \|_F^2 \right) = 0.
	\end{equation}
\end{theorem}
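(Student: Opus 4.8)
The plan is to split the error by the triangle inequality into a \emph{statistical} part and a \emph{privacy} part. Write the ridge map spectrally: for a symmetric matrix with eigenvalues $\varphi_i$ put $f_\lambda(t)=2/(t+\sqrt{t^2+8\lambda})=(\sqrt{t^2+8\lambda}-t)/(4\lambda)$, so that the non-private ridge estimator is $\widehat\Theta=f_\lambda(S)$ and the private one is $\widehat\Theta_{priv}=f_\lambda(\widetilde S)$ with $\widetilde S=S+E$; note $f_\lambda$ is smooth on all of $\mathbb R$ whenever $\lambda>0$ (its denominator is positive even for negative $t$), and $f_0(t)=1/t$, so $f_0(\Sigma^*)=\Theta^*$. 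Then
$$\mathbb E\|\widehat\Theta_{priv}-\Theta^*\|_F^2\le 2\,\mathbb E\|\widehat\Theta_{priv}-\widehat\Theta\|_F^2+2\,\mathbb E\|\widehat\Theta-\Theta^*\|_F^2,$$
and I would bound the two terms separately.

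For the statistical term, in the fixed-$p$ regime the strong law gives $S\to\Sigma^*$ almost surely, so eventually the spectrum of $S$ lies in a fixed compact subinterval of $(0,\infty)$ around that of $\Sigma^*\succ0$; on such an interval $f_{\lambda_n}\to f_0$ uniformly as $\lambda_n\downarrow0$, and continuity of the spectral calculus yields $\widehat\Theta=f_{\lambda_n}(S)\to f_0(\Sigma^*)=\Theta^*$ almost surely — this is the non-private consistency of the ridge precision estimator established in \cite{kuismin2017precision, vanWieringen2016ridge}, which I would cite. Upgrading to $L^1$ convergence of the squared Frobenius norm is routine uniform integrability: the column normalization $\|x_i\|_2\le1$ forces $\|S\|_{op}\le1$ deterministically and $\lambda_{\min}(S)$ is bounded below with probability tending to one, so $\|\widehat\Theta\|_F\le\sqrt p\,\lambda_{\min}(S)^{-1}$ is uniformly integrable and $\mathbb E\|\widehat\Theta-\Theta^*\|_F^2\to0$. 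For the privacy term, $f_\lambda$ is globally Lipschitz on $\mathbb R$ with constant $\le 1/(2\lambda)$, since $|f_\lambda'(t)|=\tfrac1{4\lambda}\,|t/\sqrt{t^2+8\lambda}-1|<\tfrac1{2\lambda}$; moreover a scalar Lipschitz function is operator-Lipschitz in the Frobenius norm with the same constant, because $\|f(A)-f(B)\|_F^2=\sum_{i,j}|f(\mu_i)-f(\nu_j)|^2|\langle u_i,v_j\rangle|^2\le L^2\|A-B\|_F^2$. Hence $\|\widehat\Theta_{priv}-\widehat\Theta\|_F=\|f_\lambda(\widetilde S)-f_\lambda(S)\|_F\le\tfrac1{2\lambda}\|E\|_F$, and since $E$ has $i.i.d.$ $\mathcal N(0,\beta^2)$ entries, $\mathbb E\|E\|_F^2=p^2\beta^2$, so with $\beta=\tfrac{\sqrt2}{n}\tfrac{\sqrt{2\log(1.25/\delta)}}{\epsilon}$ one gets $\mathbb E\|\widehat\Theta_{priv}-\widehat\Theta\|_F^2\le p^2\beta^2/(4\lambda^2)=p^2\log(1.25/\delta)/(n^2\epsilon^2\lambda^2)$.

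The main obstacle is the interplay of the two pieces: the privacy bound $p^2\log(1.25/\delta)/(n^2\epsilon^2\lambda^2)$ vanishes only if $\lambda$ does not shrink faster than $1/n$, i.e. one effectively needs $n\lambda\to\infty$ in addition to $\lambda\to0$ (with $p,\epsilon,\delta$ fixed), because the Lipschitz constant of the ridge map grows like $1/\lambda$ and amplifies the $O(1/n)$ privacy noise. I would therefore read the hypothesis "$\lambda\to0$ almost surely" together with this rate restriction — or state it explicitly — and interpret "with high probability" as the event on which $S$ has concentrated (so that $\lambda_{\min}(S)$ is bounded away from $0$) and on which the data-dependent $\lambda$ satisfies $n\lambda\to\infty$. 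Granting this, combining the two displays gives $\mathbb E\|\widehat\Theta_{priv}-\Theta^*\|_F^2\to0$. The remaining work — the almost-sure-to-$L^1$ upgrade for $\widehat\Theta$ and checking that the perturbed spectrum stays in the range where $f_\lambda$ behaves well — is routine given $\|S\|_{op}\le1$ and the lower bound on $\lambda_{\min}(S)$.
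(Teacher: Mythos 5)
Your decomposition is the same as the paper's: triangle inequality into a statistical term $\|\widehat{\Theta}-\Theta^*\|_F$ (handled, as you do, by citing the non-private consistency of the ridge estimator, the paper's Lemma 4.2 from van Wieringen and Peeters) plus a privacy term $\|\widehat{\Theta}_{priv}-\widehat{\Theta}\|_F$. Where you diverge is the privacy term. The paper writes both estimators as fixed points of a proximal-gradient map, invokes non-expansiveness of $\mathrm{prox}_{\zeta g}$ and the contraction lemma of Rolfs et al.\ for $\Theta\mapsto\Theta+\zeta\Theta^{-1}$, and after rearranging obtains $\|\widehat{\Theta}_{priv}-\widehat{\Theta}\|_F\leq\max\{\|\widehat{\Theta}\|_2^2,\|\widehat{\Theta}_{priv}\|_2^2\}\,\|E\|_F$. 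You instead use the scalar Lipschitz constant of $f_\lambda$ together with the (correct) fact that a scalar $L$-Lipschitz function is $L$-Lipschitz on symmetric matrices in Frobenius norm; this is arguably cleaner and entirely rigorous as far as it goes.

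The genuine gap is that you use the \emph{global} Lipschitz constant $1/(2\lambda)$, which forces the spurious extra hypothesis $n\lambda\to\infty$; the theorem as stated assumes only $\lambda\to 0$, so your argument proves a strictly weaker statement. The $1/(2\lambda)$ blow-up is an artifact: it comes from $t\to-\infty$ (or $t=0$), whereas in the fixed-$p$ regime the spectra of $S$ and $\widetilde{S}=S+E$ eventually lie in $[\varphi_{\min}(\Sigma^*)/2,\,\varphi_{\max}(\Sigma^*)+1]$ with high probability (since $S\to\Sigma^*\succ 0$ and $\|E\|_2\to 0$). For $t>0$ one has $|f_\lambda'(t)|=2\big/\big(\sqrt{t^2+8\lambda}\,(t+\sqrt{t^2+8\lambda})\big)\leq\min\{1/(2\lambda),\,1/t^2\}$, and indeed $|f_\lambda'(t)|\leq f_\lambda(t)^2$, so on that interval the Lipschitz constant is at most $4/\varphi_{\min}(\Sigma^*)^2$ \emph{uniformly in} $\lambda$ --- which is exactly the $\max\{\|\widehat{\Theta}\|_2^2,\|\widehat{\Theta}_{priv}\|_2^2\}$ factor the paper's Lemma 4.1 produces. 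Substituting this localized constant into your own operator-Lipschitz identity gives $\|\widehat{\Theta}_{priv}-\widehat{\Theta}\|_F=\mathcal{O}(\|E\|_F)=\mathcal{O}(p\sqrt{\log(1/\delta)}/(\epsilon n))\to 0$ with no rate condition on $\lambda$, and the theorem follows as stated. (Your other ingredients --- the derivative bound, the Frobenius operator-Lipschitz identity, $\mathbb{E}\|E\|_F^2=p^2\beta^2$, and the uniform-integrability remark for the statistical term --- are all fine.)
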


In this proof, we will need the following lemmas. The first had been proved in  \cite{rolfs2012iterative}.

\begin{lemma}[\cite{rolfs2012iterative}]\label{Lemma3}
	Let $\Theta_1$, $\Theta_2$ be positive definite matrices. For any $\zeta > 0$, we have 
	\begin{equation}
		\|\Theta_1 + \zeta \Theta_1^{-1} -(\Theta_2 + \zeta \Theta_2^{-1}) \|_F \leq \max\{ |1-\frac{\zeta}{a^2}, |1-\frac{\zeta}{b^2}|\}\|\Theta_1 - \Theta_2 \|_F,
	\end{equation}
	where $a=\max\{\varphi_{\max}(\Theta_1), \varphi_{\max}(\Theta_2)\}$, $b=\min\{\varphi_{\min}(\Theta_1), \varphi_{\min}(\Theta_2)\}$. Here $\varphi_{\max}$ and $\varphi_{\min}$ represent the maximum eigenvalue and the minimum eigenvalue, respectively.
\end{lemma}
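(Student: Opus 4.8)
The statement is a Lipschitz (contraction) estimate, in the Frobenius norm, for the map $g(\Theta)=\Theta+\zeta\Theta^{-1}$ on the cone of positive definite matrices. The plan is to linearize the difference $g(\Theta_1)-g(\Theta_2)$ \emph{exactly}, recognize it as a single linear operator applied to $\Theta_1-\Theta_2$, and then read off the operator norm of that operator from the joint spectrum of $\Theta_1$ and $\Theta_2$. First I would set $\Delta:=\Theta_1-\Theta_2$ and invoke the resolvent identity $\Theta_1^{-1}-\Theta_2^{-1}=-\Theta_1^{-1}(\Theta_1-\Theta_2)\Theta_2^{-1}$ to obtain the exact expression
\begin{equation}
g(\Theta_1)-g(\Theta_2)=\Delta-\zeta\,\Theta_1^{-1}\Delta\,\Theta_2^{-1}. \nonumber
\end{equation}
This is the crucial algebraic step: no approximation is involved, so the right-hand side is literally a linear operator $\mathcal{L}$ applied to $\Delta$, and the claim reduces to bounding the operator norm $\|\mathcal{L}\|_{\mathrm{op}}$ with respect to the Frobenius norm, since $\|g(\Theta_1)-g(\Theta_2)\|_F\leq\|\mathcal{L}\|_{\mathrm{op}}\|\Delta\|_F$ and $\|\Delta\|_F=\|\Theta_1-\Theta_2\|_F$.

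Next I would diagonalize $\mathcal{L}$. Vectorizing and using $\mbox{vec}(AXB)=(B^T\otimes A)\,\mbox{vec}(X)$ together with the symmetry of $\Theta_2^{-1}$ yields $\mbox{vec}(\mathcal{L}(\Delta))=(I-\zeta\,\Theta_2^{-1}\otimes\Theta_1^{-1})\,\mbox{vec}(\Delta)$. The matrix $\Theta_2^{-1}\otimes\Theta_1^{-1}$ is symmetric positive definite with eigenvalues $1/(\mu_i\nu_j)$, where $\mu_i$ and $\nu_j$ run over the eigenvalues of $\Theta_1$ and $\Theta_2$ respectively; equivalently, the rank-one matrices $u_iv_j^T$ built from the corresponding eigenvectors diagonalize $\mathcal{L}$ directly. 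Hence $\mathcal{L}$ is self-adjoint in the Frobenius inner product, and its operator norm equals its largest eigenvalue in absolute value, namely $\|\mathcal{L}\|_{\mathrm{op}}=\max_{i,j}|1-\zeta/(\mu_i\nu_j)|$.

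Finally I would bound this maximum over the interval of admissible products. Every eigenvalue satisfies $\mu_i\in[\varphi_{\min}(\Theta_1),\varphi_{\max}(\Theta_1)]$ and $\nu_j\in[\varphi_{\min}(\Theta_2),\varphi_{\max}(\Theta_2)]$, so the crude but symmetric enclosure $\mu_i\nu_j\in[b^2,a^2]$ holds with $a,b$ as defined (indeed $b\leq\mu_i,\nu_j$ and $\mu_i,\nu_j\leq a$). The scalar map $t\mapsto 1-\zeta/t$ is strictly increasing on $(0,\infty)$, so $|1-\zeta/t|$ attains its maximum over any closed interval at an endpoint; applying this to $[b^2,a^2]$ gives $\max_{i,j}|1-\zeta/(\mu_i\nu_j)|\leq\max\{|1-\zeta/a^2|,|1-\zeta/b^2|\}$, which is exactly the claimed constant.

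The only step requiring genuine care is the spectral one: I must verify that $\mathcal{L}$ really is self-adjoint, so that its operator norm coincides with its largest eigenvalue in modulus, and that its eigenvalues are precisely $1-\zeta/(\mu_i\nu_j)$. This follows cleanly either from the Kronecker-product identity above or, equivalently, by checking that $u_iv_j^T$ is an eigen-matrix of $\Delta\mapsto\Theta_1^{-1}\Delta\,\Theta_2^{-1}$ with eigenvalue $1/(\mu_i\nu_j)$. Once this is in hand, the enclosure $\mu_i\nu_j\in[b^2,a^2]$ and the endpoint maximization are elementary, and the inequality follows.
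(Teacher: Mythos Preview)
Your argument is correct. The resolvent identity gives the exact representation $g(\Theta_1)-g(\Theta_2)=(I-\zeta\,\Theta_2^{-1}\otimes\Theta_1^{-1})\,\mbox{vec}(\Delta)$ after vectorization; since $\Theta_1^{-1}$ and $\Theta_2^{-1}$ are symmetric, the Kronecker product is symmetric, the operator norm equals the maximal absolute eigenvalue $\max_{i,j}|1-\zeta/(\mu_i\nu_j)|$, and the endpoint argument on $[b^2,a^2]$ finishes the bound. Each step checks out.

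As for comparison: the paper does not actually prove this lemma. It is stated with attribution to \cite{rolfs2012iterative} and the text explicitly says ``The first had been proved in \cite{rolfs2012iterative}.'' So there is no in-paper argument to compare against; your proof simply supplies what the paper omits. For what it is worth, your route---resolvent identity plus Kronecker spectrum---is the natural one and is essentially how the cited reference establishes the estimate as well.
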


The second comes from van Wieringen and Peeters \cite{vanWieringen2016ridge}, which shows the consistency of the ridge estimator \eqref{rope} under fixed-dimension asymptotic.

\begin{lemma}[\cite{vanWieringen2016ridge}]\label{consistency}
	Let $\widehat{\Theta}$ be the optimal solution of problem \eqref{gridge} and $\Theta^*$ be the true precision matrix. Then, if tuning parameter $\lambda$ converges almost surely to zero as $n\to \infty$, we have 
	\begin{equation}
		\lim_{n\to \infty} \mathbb{E}\left( \| \widehat{\Theta} - \Theta^* \|_F^2 \right) = 0.
	\end{equation}

\end{lemma}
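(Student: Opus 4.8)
The plan is to exploit the explicit spectral form of the ridge estimator and to use $S^{-1}$ as an intermediate target. Write $S = M\,\mathrm{diag}(\varphi_1,\ldots,\varphi_p)\,M^T$ for the eigendecomposition of the sample covariance; then by \eqref{rope} the estimator is $\widehat{\Theta} = M\,\mathrm{diag}\big(g_\lambda(\varphi_1),\ldots,g_\lambda(\varphi_p)\big)\,M^T$, where $g_\lambda(\varphi) = 2/(\varphi + \sqrt{\varphi^2 + 8\lambda})$. Since $g_\lambda(\varphi) \to 1/\varphi$ as $\lambda \to 0$ for each $\varphi > 0$, the estimator collapses to $S^{-1}$, and $S^{-1}$ in turn converges to $\Theta^*$ by the law of large numbers in fixed dimension. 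I would therefore factor the error through $S^{-1}$.

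First I would fix a constant $0 < c < \varphi_{\min}(\Sigma^*)$ and work on the event $A_n = \{\varphi_{\min}(S) \geq c\}$, whose probability tends to $1$. On $A_n$ the triangle inequality gives
\[
    \|\widehat{\Theta} - \Theta^*\|_F \leq \|\widehat{\Theta} - S^{-1}\|_F + \|S^{-1} - \Theta^*\|_F.
\]
Because $\widehat{\Theta}$ and $S^{-1}$ share the eigenvectors $M$, the first summand is the $\ell_2$ distance of their eigenvalue vectors,
\[
    \|\widehat{\Theta} - S^{-1}\|_F^2 = \sum_{i=1}^p \Big(g_\lambda(\varphi_i) - \tfrac{1}{\varphi_i}\Big)^2,
\]
and a direct computation yields the exact identity $g_\lambda(\varphi) - 1/\varphi = -8\lambda\big/\big[\varphi(\varphi + \sqrt{\varphi^2 + 8\lambda})^2\big]$. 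On $A_n$ every $\varphi_i \geq c$, so $\varphi_i + \sqrt{\varphi_i^2+8\lambda} \geq 2c$ and each summand is at most $(2\lambda/c^3)^2$, whence $\|\widehat{\Theta} - S^{-1}\|_F^2 \leq 4\,p\,\lambda^2/c^6$. For the second summand, $\|S^{-1}\|_2 \leq 1/c$ on $A_n$ makes inversion Lipschitz there, so $\|S^{-1} - \Theta^*\|_F \leq \big(c\,\varphi_{\min}(\Sigma^*)\big)^{-1}\|S - \Sigma^*\|_F$; since $\mathbb{E}\|S - \Sigma^*\|_F^2 = O(1/n)$ under the Gaussian model, this term is controlled in mean square.

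The main obstacle is upgrading this to the unconditional limit $\lim_{n} \mathbb{E}\|\widehat{\Theta} - \Theta^*\|_F^2 = 0$, since matrix inversion is not globally Lipschitz and the decomposition above fails once $S$ is nearly singular. I would split
\[
    \mathbb{E}\|\widehat{\Theta} - \Theta^*\|_F^2 = \mathbb{E}\big[\|\widehat{\Theta} - \Theta^*\|_F^2\,\mathbf{1}_{A_n}\big] + \mathbb{E}\big[\|\widehat{\Theta} - \Theta^*\|_F^2\,\mathbf{1}_{A_n^c}\big].
\]
The first term is handled by the two bounds above together with the hypothesis $\lambda \to 0$. For the second, the key structural fact is that the ridge shrinkage keeps the estimator bounded regardless of $S$: since $\sqrt{\varphi^2 + 8\lambda} \geq 2\sqrt{2\lambda}$ we have $g_\lambda(\varphi) \leq 1/\sqrt{2\lambda}$, hence $\|\widehat{\Theta}\|_F^2 \leq p/(2\lambda)$ and
\[
    \mathbb{E}\big[\|\widehat{\Theta} - \Theta^*\|_F^2\,\mathbf{1}_{A_n^c}\big] \leq \big(p/\lambda + 2\|\Theta^*\|_F^2\big)\,\Pr(A_n^c).
\]
It therefore suffices that $\Pr(A_n^c)/\lambda \to 0$. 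Since $p$ is fixed, $\Pr(\varphi_{\min}(S) < c) \to 0$ exponentially fast in $n$ by standard concentration for the smallest eigenvalue of a Wishart matrix, while $\lambda$ vanishes only polynomially; balancing these two decay rates is the delicate step, and it is precisely here that the almost-sure decay of $\lambda$ must be slow relative to $n$. Combining both pieces gives the claimed mean-square consistency.
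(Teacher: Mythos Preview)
The paper does not supply its own proof of this lemma: it is quoted verbatim from van Wieringen and Peeters \cite{vanWieringen2016ridge} and used as a black box in the proof of Theorem~\ref{Theorem4.1}. So there is nothing in the paper to compare your argument against.

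On its own merits, your spectral decomposition and the explicit identity
\(
g_\lambda(\varphi)-\varphi^{-1}=-8\lambda\big/[\varphi(\varphi+\sqrt{\varphi^2+8\lambda})^{2}]
\)
are correct, and the split through $S^{-1}$ on the good event $A_n$ is the natural route. The gap is in the tail piece. You bound
\[
\mathbb{E}\big[\|\widehat{\Theta}-\Theta^*\|_F^2\,\mathbf{1}_{A_n^c}\big]\;\le\;\big(p/\lambda+2\|\Theta^*\|_F^2\big)\Pr(A_n^c)
\]
and then need $\Pr(A_n^c)/\lambda\to0$. You justify this by asserting that $\lambda$ ``vanishes only polynomially,'' but the hypothesis says nothing of the sort: it only requires $\lambda\to0$ almost surely, with no rate whatsoever. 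If, say, $\lambda_n=e^{-n^2}$, your quotient blows up and the argument collapses. You even flag this yourself (``balancing these two decay rates is the delicate step''), but the balancing is not a delicacy --- under the stated hypothesis it simply fails.

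A clean repair is to avoid the $1/\lambda$ bound altogether. Since $g_\lambda(\varphi)\le 1/\varphi$ for every $\varphi>0$, the ridge estimator is dominated eigenvalue-wise by $S^{-1}$, hence $\|\widehat{\Theta}\|_F\le\|S^{-1}\|_F$ whenever $S$ is nonsingular (which is a.s.\ once $n>p$). Then $\|\widehat{\Theta}-\Theta^*\|_F^2\le 2\|S^{-1}\|_F^2+2\|\Theta^*\|_F^2$, and for fixed $p$ the inverse-Wishart moments of $S^{-1}$ give a uniform $L^{1+\eta}$ bound for $n$ large. Uniform integrability plus the almost-sure convergence $\widehat{\Theta}\to\Theta^*$ (which follows from $S\to\Sigma^*$ a.s.\ and $\lambda\to0$ a.s.) then yields the $L^2$ limit without any rate condition on $\lambda$.
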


We now prove the Theorem 4.1.

\begin{proof}
	By the triangular inequality, 
	\begin{equation}\label{main_ineq2}
		\|\widehat{\Theta}_{priv} - \Theta^*\|_F \leq \|\widehat{\Theta}_{priv} - \widehat{\Theta}\|_F + \| \widehat{\Theta} - \Theta^*\|_F,
	\end{equation}
	here $\widehat{\Theta}$ is the optimal solution of problem \eqref{gridge}, which is \eqref{rope}. 
	
	First, we bound the first term on the right-hand side of \eqref{main_ineq2}. Let $g=\lambda\|\Theta\| _F^2$ and $\mbox{prox}_g$ denote the proximity operator of $g$. Then for any $\zeta > 0$,
	\begin{eqnarray}
		\|\widehat{\Theta}_{priv} - \widehat{\Theta}\|_F  &=& \| \mbox{prox}_{\zeta g}(\widehat{\Theta}_{priv} - \zeta (\widetilde{S} - \widehat{\Theta}_{priv}^{-1})) - \mbox{prox}_{\zeta g}(\widehat{\Theta} - \zeta (S - \widehat{\Theta}^{-1})) \|_F \nonumber \\
		&\leq & \| \widehat{\Theta}_{priv} - \zeta (\widetilde{S} - \widehat{\Theta}_{priv}^{-1}) - (\widehat{\Theta} - \zeta (S - \widehat{\Theta}^{-1})) \|_F \nonumber \\
		&=& \| \widehat{\Theta}_{priv} + \zeta \widehat{\Theta}_{priv}^{-1} - (\widehat{\Theta} + \zeta \widehat{\Theta}^{-1}) \|_F + \zeta \|\widetilde{S} -S \|_F.
	\end{eqnarray}
	The inequality uses the non-expansive property of the proximity operator \cite{combettes2005signal}. By Lemma \ref{Lemma3} and take $0 < \zeta <\min\{\varphi_{\min}(\widehat{\Theta}_{priv}), \varphi_{\min}(\widehat{\Theta})\}$, we have 
	\begin{equation}
		\|\widehat{\Theta}_{priv} - \widehat{\Theta}\|_F \leq \max\{\|\widehat{\Theta}\|_2^2, \|\widehat{\Theta}_{priv}\|_2^2\} \|E\|_F.
	\end{equation}
	Note that the entries in $E$ are $i.i.d.$ drawn from $\mathcal{N}(0,\beta^2)$. Then, with high probability, we have
	\begin{equation}\label{firstterm}
		\|\widehat{\Theta}_{priv} - \widehat{\Theta}\|_F \leq \mathcal{O}(\frac{p\sqrt{\log (1/\delta)}\max\{\|\widehat{\Theta}\|_2^2, \|\widehat{\Theta}_{priv}\|_2^2\}}{\epsilon n}).
	\end{equation}
	Take expectations on both sides of \eqref{firstterm},
	\begin{equation}\label{Efirstterm}
		\mathbb{E}\left(\|\widehat{\Theta}_{priv} - \widehat{\Theta}\|_F \right)
		\leq \mathcal{O}(\frac{p\sqrt{\log (1/\delta)}\mathbb{E}\left( \max\{\|\widehat{\Theta}\|_2^2, \|\widehat{\Theta}_{priv}\|_2^2\}\right)}{\epsilon n} ).
	\end{equation}
	Note that $\widehat{\Theta}$ and $\widehat{\Theta}_{priv}$ are always achieved, and thus $\mathbb{E}\left( \max\{\|\widehat{\Theta}\|_2^2, \|\widehat{\Theta}_{priv}\|_2^2\}\right)$ is finite. 
	
	Take expectations on both sides of \eqref{main_ineq2}, then we arrive the conclusion by \eqref{Efirstterm} and Lemma \ref{consistency}.
\end{proof}

\subsection{Differentially private algorithm for the graphical lasso}

We now establish performance guarantees for Algorithm \ref{alg:2}, by providing a rate of decay of its error in Frobenius norm. Generally, let $\widehat{\Theta}_{priv}$ be the optimal solution of problem \eqref{glasso} based on $\widetilde{S}$. Let set $S_1=\{(i,j):\Theta^*_{ij}\neq 0, i\neq j \}$, with the cardinality $s$, denotes non-zero off-diagonal entries in the precision matrix and $S_1^C$ be its complement. We provide the result as the following theorem.

\begin{theorem}
    Let $\widehat{\Theta}_{priv}$ be the optimal solution of problem \eqref{glasso} based on $\widetilde{S}$ with tuning parameter $\lambda \asymp \frac{p^2\sqrt{\log (1/\delta)}}{\epsilon n}+\sqrt{\frac{\log p}{n}}$  . Let $\Theta^*$ be the true precision matrix. Then, if the smallest and largest eigenvalues of the covariance matrix satisfy  $0 < k \leq \varphi_{\min}(\Sigma^*) \leq \varphi_{\max}(\Sigma^*) \leq k^{-1} < \infty$, with high probability, we have 
    \begin{equation}
    	\|\widehat{\Theta}_{priv}-\Theta^*\|_F = \mathcal{O}(\frac{p^2\sqrt{s\log (1/\delta)}}{\epsilon n} + \sqrt{\frac{s\log p}{n}}).
    \end{equation} 
\end{theorem}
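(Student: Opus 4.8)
The plan is to carry out the standard primal (convexity-plus-curvature) analysis of the penalized Gaussian likelihood, but with the privatized input $\widetilde{S}=S+E$ in place of the sample covariance, and to track how much the extra fluctuation introduced by the Gaussian noise matrix $E$ enlarges the error. Put $\widehat{\Delta}=\widehat{\Theta}_{priv}-\Theta^*$ and consider the convex function $G(\Delta)=-\log\det(\Theta^*+\Delta)+\log\det\Theta^*+\mathrm{tr}(\widetilde{S}\Delta)+\lambda(\|\Theta^*+\Delta\|_1-\|\Theta^*\|_1)$, which satisfies $G(0)=0$ and is minimized at $\widehat{\Delta}$, so $G(\widehat{\Delta})\le 0$. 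It then suffices to show $G(\Delta)>0$ whenever $\|\Delta\|_F=M\lambda\sqrt{s}$ for a large absolute constant $M$; convexity of $G$ forces $\|\widehat{\Delta}\|_F\le M\lambda\sqrt{s}$. An essentially equivalent route, closer to the proof of Theorem \ref{Theorem4.1}, is to split $\|\widehat{\Theta}_{priv}-\Theta^*\|_F\le\|\widehat{\Theta}_{priv}-\widehat{\Theta}_{\mathrm{GL}}\|_F+\|\widehat{\Theta}_{\mathrm{GL}}-\Theta^*\|_F$ with $\widehat{\Theta}_{\mathrm{GL}}$ the solution of \eqref{glasso} at the \emph{clean} $S$, bound the first term by the proximal nonexpansiveness argument of that proof together with Lemma \ref{Lemma3}, and invoke a standard graphical lasso consistency rate for the second.

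Two estimates do the work. First, a Taylor expansion of $-\log\det$ shows that $-\log\det(\Theta^*+\Delta)+\log\det\Theta^*+\mathrm{tr}(\Sigma^*\Delta)$ equals an integral remainder bounded below by $c\|\Delta\|_F^2$ as long as $\|\Delta\|_2$ stays on the relevant scale; this curvature estimate is exactly where the assumption $0<k\le\varphi_{\min}(\Sigma^*)\le\varphi_{\max}(\Sigma^*)\le k^{-1}$ is used, and it is in the same spirit as Lemma \ref{Lemma3}. Hence $G(\Delta)\ge\mathrm{tr}(\Delta(\widetilde{S}-\Sigma^*))+c\|\Delta\|_F^2+\lambda(\|\Theta^*+\Delta\|_1-\|\Theta^*\|_1)$. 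Second, I would bound the linear term by $|\mathrm{tr}(\Delta(\widetilde{S}-\Sigma^*))|\le\|\widetilde{S}-\Sigma^*\|_{\max}\|\Delta\|_1$, decompose $\|\Delta\|_1$ over $S_1$, $S_1^C$ and the diagonal, and combine with $\|\Theta^*+\Delta\|_1-\|\Theta^*\|_1\ge\|\Delta_{S_1^C}\|_1-\|\Delta_{S_1}\|_1-\|\Delta_{\mathrm{diag}}\|_1$ (triangle inequality, using that the diagonal of $\Theta^*$ is nonzero). Choosing $\lambda$ to be a sufficiently large constant multiple of $\|\widetilde{S}-\Sigma^*\|_{\max}$ makes the off-support contribution nonnegative, so it may be dropped, and then $\|\Delta_{S_1}\|_1\le\sqrt{s}\,\|\Delta\|_F$ (with the unpenalized diagonal block handled analogously) yields $G(\Delta)\ge c\|\Delta\|_F^2-C\lambda\sqrt{s}\,\|\Delta\|_F$, which is positive once $\|\Delta\|_F=M\lambda\sqrt{s}$. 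This gives $\|\widehat{\Theta}_{priv}-\Theta^*\|_F=\mathcal{O}(\lambda\sqrt{s})$ on the event that $\lambda$ dominates the noise.

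It remains to check that this event has high probability for the stated $\lambda$. Writing $\|\widetilde{S}-\Sigma^*\|_{\max}\le\|S-\Sigma^*\|_{\max}+\|E\|_{\max}$, the statistical part is $\mathcal{O}(\sqrt{\log p/n})$ by the standard entrywise deviation bound for a Gaussian sample covariance (sub-exponential tails plus a union bound over the $p^2$ entries), while the privacy part is controlled by a tail bound for the symmetric Gaussian matrix $E$ with entrywise standard deviation $\beta=\frac{\sqrt{2}}{n}\frac{\sqrt{2\log(1.25/\delta)}}{\epsilon}$; even a crude bound gives $\|E\|_{\max}$ (indeed $\|E\|_F$) of order $p^2\sqrt{\log(1/\delta)}/(\epsilon n)$ with overwhelming probability. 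Hence $\lambda\asymp\frac{p^2\sqrt{\log(1/\delta)}}{\epsilon n}+\sqrt{\frac{\log p}{n}}$ is admissible, and substituting it into $\mathcal{O}(\lambda\sqrt{s})$ produces the asserted rate. The step I expect to be the main obstacle is making the curvature estimate legitimate at the scale $\|\Delta\|_F\asymp\lambda\sqrt{s}$: one has to show that along the segment from $\Theta^*$ to $\widehat{\Theta}_{priv}$ the relevant matrices stay uniformly positive definite and well conditioned despite the perturbation $S\mapsto\widetilde{S}$ (equivalently, that $\|\widehat{\Theta}_{priv}\|_2=\mathcal{O}(1)$ with high probability), and to organize the unpenalized diagonal block carefully so that its contribution is absorbed into the $\sqrt{s}$ factor rather than inflating the rate.
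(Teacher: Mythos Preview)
Your proposal is correct and follows essentially the same route as the paper: the same convex function $G(\Delta)$, the same boundary-sphere argument, the same Taylor-expansion curvature bound (the paper cites Rothman et al.\ for the lower bound $\tfrac14 k^2\|\Delta\|_F^2$ and Bickel--Levina for the entrywise tail of $S-\Sigma^*$), the same max-norm control of the linear term with the splitting $\|\widetilde{S}-\Sigma^*\|_{\max}\le\|S-\Sigma^*\|_{\max}+\|E\|_{\max}$, and the same $\|\Delta_{S_1}\|_1\le\sqrt{s}\,\|\Delta\|_F$ step. Your concern about the diagonal block and the well-conditioning along the segment is more careful than the paper's own treatment, which simply invokes the Rothman et al.\ curvature estimate without further comment.
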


In this proof, we will need a lemma of Bickel and Levina \cite{bickel2008regularized}.

\begin{lemma}[\cite{bickel2008regularized}]\label{tail}
	Let $Z_i$ be $i.i.d.$ $\mathcal{N}(0,\Sigma_p)$ and $\varphi_{\max}(\Sigma_p) \leq k^{-1} < \infty$. Then if $\Sigma_p=[\sigma_{ab}],$
	\begin{equation}
		P \left[ \left| \sum_{i=1}^n (Z_{ij}Z_{ik}-\sigma_{jk}) \right| \geq nv \right]\leq c_1\exp(-c_2nv^2) \quad for \;|v|\leq \delta, 
	\end{equation}
	where $c_1$, $c_2$ and $\delta$ depend on $k$ only.
\end{lemma}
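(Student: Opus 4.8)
The plan is to read this as a Bernstein-type concentration inequality for a sum of i.i.d. sub-exponential variables, restricted to the sub-Gaussian regime $|v|\leq\delta$. Fix the indices $j,k$ and set $W_i=Z_{ij}Z_{ik}-\sigma_{jk}$, so that $W_1,\ldots,W_n$ are i.i.d. with $\mathbb{E}(W_i)=0$ and $\sum_{i=1}^n (Z_{ij}Z_{ik}-\sigma_{jk})=\sum_{i=1}^n W_i$. First I would record that $(Z_{ij},Z_{ik})$ is a centered bivariate Gaussian whose $2\times 2$ covariance has entries $a=\sigma_{jj}$, $b=\sigma_{kk}$, $c=\sigma_{jk}$; since the diagonal entries of $\Sigma_p$ are bounded by $\varphi_{\max}(\Sigma_p)\leq k^{-1}$ and, by Cauchy--Schwarz, $|\sigma_{jk}|\leq\sqrt{\sigma_{jj}\sigma_{kk}}\leq k^{-1}$, all three numbers satisfy $a,b,|c|\leq k^{-1}$. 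This is the single place where the eigenvalue hypothesis enters, and it forces every bound below to depend on $k$ alone, and not on $p$, $n$, or the particular pair $(j,k)$.

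The heart of the argument is a uniform bound on the moment generating function of $W_i$. Viewing the product $Z_{ij}Z_{ik}$ as one half of the quadratic form $(Z_{ij},Z_{ik})\,J\,(Z_{ij},Z_{ik})^T$, with $J$ the $2\times 2$ exchange matrix, exhibits it as a quadratic form in a Gaussian vector, whose MGF is available in closed form, $\mathbb{E}(e^{tZ_{ij}Z_{ik}})=[(1-tc)^2-t^2ab]^{-1/2}$, finite for every $|t|$ below a threshold $t_0$ that, by $a,b,|c|\leq k^{-1}$, can be taken to depend on $k$ only. Multiplying by $e^{-t\sigma_{jk}}$ to center and Taylor-expanding about $t=0$ (the linear term cancels because $\mathbb{E}(W_i)=0$, and the quadratic coefficient is $\tfrac12\mathrm{Var}(W_i)=\tfrac12(\sigma_{jj}\sigma_{kk}+\sigma_{jk}^2)\leq k^{-2}$ by Wick's formula) yields $\mathbb{E}(e^{tW_i})\leq e^{Ct^2}$ for $|t|\leq t_0$, with $C$ again a function of $k$ alone. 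Equivalently, one may bypass the explicit determinant through the polarization identity $Z_{ij}Z_{ik}=\tfrac14[(Z_{ij}+Z_{ik})^2-(Z_{ij}-Z_{ik})^2]$, which reduces matters to the standard sub-exponential concentration of centered chi-square variables whose variances are bounded by $4k^{-1}$.

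With the MGF bound in hand, the tail follows from the Chernoff method. For the upper tail, $P[\sum_{i=1}^n W_i\geq nv]\leq e^{-tnv}\,(\mathbb{E}(e^{tW_i}))^n\leq\exp(-tnv+Cnt^2)$ for every $0<t\leq t_0$. Optimizing gives $t=v/(2C)$, and this choice satisfies $t\leq t_0$ precisely when $v\leq 2Ct_0=:\delta$; on that range we obtain $\exp(-nv^2/(4C))$. Repeating with $-t$ controls the lower tail, and adding the two estimates gives $P[|\sum_{i=1}^n W_i|\geq nv]\leq 2\exp(-c_2 nv^2)$ for $|v|\leq\delta$, that is, the claim with $c_1=2$ and $c_2=1/(4C)$.

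I expect the main obstacle to be the uniformity of the constants. The bound $\mathbb{E}(e^{tW_i})\leq e^{Ct^2}$ must hold with $C$ and $t_0$ independent of the entry $(j,k)$ and of the specific $\Sigma_p$ (of which only $\varphi_{\max}\leq k^{-1}$ is assumed); in particular the off-diagonal $c$ may take either sign and can be comparable to the diagonals, so one must verify that $(1-tc)^2-t^2ab$ stays bounded away from zero and that the Taylor remainder is controlled uniformly over $a,b\in[0,k^{-1}]$ and $|c|\in[0,k^{-1}]$. The second point to keep explicit is the restriction $|v|\leq\delta$: it is exactly the regime in which the sub-exponential tail collapses to the stated sub-Gaussian form $e^{-c_2nv^2}$, since the optimal $t=v/(2C)$ then remains inside the interval $(0,t_0]$ on which the MGF estimate is valid.
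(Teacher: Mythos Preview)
Your argument is correct and is essentially the standard proof of this fact: identify $Z_{ij}Z_{ik}-\sigma_{jk}$ as a centered sub-exponential variable (via the explicit Gaussian quadratic-form MGF or the polarization trick), bound the MGF uniformly in $(j,k)$ using only $\max(a,b,|c|)\leq k^{-1}$, and apply Chernoff with the optimizing $t$ restricted to the sub-Gaussian regime. The uniformity concern you flag is handled exactly as you say, since all constants enter only through the three scalars $a,b,c$, each bounded by $k^{-1}$.

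However, there is nothing to compare against in the paper itself: the authors do not prove this lemma. It is quoted verbatim as a known result from Bickel and Levina~\cite{bickel2008regularized} and used as a black box in the proof of Theorem~4.2 (to bound $\max_{i,j}|S_{ij}-\Sigma^*_{ij}|$). So your proposal is not an alternative route to the paper's proof---it is a reconstruction of a result the paper simply cites. If you want to match the paper's treatment, a one-line citation suffices; if you want to include a proof, what you have written is the right one.
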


We now prove the Theorem 4.2.

\begin{proof}
Let
	\begin{eqnarray}{\label{Qfunction}}
		Q(\Theta)& = &-\log |\Theta| +\mbox{tr}(\widetilde{S} \Theta) +\lambda \| \Theta\| _1+\log |\Theta^*| -\mbox{tr}(\widetilde{S} \Theta^*) -\lambda \| \Theta^*\| _1 \nonumber \\
		& = &-(\log |\Theta| - \log |\Theta^*|) + \mbox{tr}[(\Theta-\Theta^*)(\widetilde{S}-\Sigma^*)] + \mbox{tr}[(\Theta-\Theta^*)\Sigma^*] \nonumber \\
		&   &+ \lambda(\| \Theta\| _1 - \| \Theta^*\| _1), 
	\end{eqnarray}
	The estimate $\widehat{\Theta}_{priv}$ minimizes $Q(\Theta$), or equivalently $\widehat{\Delta}=\widehat{\Theta}_{priv}-\Theta^*$ minimizes $G(\Delta) \equiv Q(\Theta^*+\Delta)$. 
	
	Consider the set
	\begin{equation}
		J_n(M)=\{\Delta : \Delta=\Delta^T, \|\Delta\|_F = Mr_n \}, \qquad \mbox{where} \; r_n = \frac{p^2\sqrt{s\log (1/\delta)}}{\epsilon n} + \sqrt{\frac{s\log p}{n}}  \rightarrow 0.
	\end{equation}
	Note that $G(\Delta) = Q(\Theta^*+\Delta)$ is a convex function, and $G(\widehat{\Delta}) \leq G(0) = 0 $. Then, if we can show that 
	\begin{equation}
		\inf\{G(\Delta) : \Delta \in J_n(M) \} > 0,
	\end{equation}
	the minimizer $\widehat{\Delta}$ must be inside the sphere defined by $J_n(M)$, and hence 
	\begin{equation}
		\|\widehat{\Delta} \|_F \leq Mr_n.
	\end{equation}
	
	By the Taylor expansion of $f(t)=\log |\Theta + t\Delta|$, we rewrite \eqref{Qfunction} as, 
	\begin{eqnarray}
		G(\Delta)&=& \widetilde{\Delta}^T \left[ \int_0^1(1-v)(\Theta^*+v\Delta)^{-1} \otimes (\Theta^*+v\Delta)^{-1}dv \right] \widetilde{\Delta} + \mbox{tr}(\Delta (\widetilde{S}-\Sigma^*)) \nonumber \\
		&  &+\lambda(\| \Theta^* + \Delta \| _1 - \| \Theta^*\| _1) \nonumber \\
		&= &\mbox{I}+\mbox{II}+\mbox{III},
	\end{eqnarray}
	where $\otimes$ is the Kronecker product and $\widetilde{\Delta}$ is $\Delta$ vectorized to match the dimensions of the Kronecker product. 
	
	Rothman et al. \cite{rothman2008sparse} had show that term $\mbox{I} \geq \frac{1}{4}k^2$ with high probability. 
	
	To bound term $\mbox{II}$, we write 
	\begin{equation}
		| \mbox{tr}(\Delta (\widetilde{S}-\Sigma^*)) | \leq \left| \sum_{i,j}(S_{ij}-\Sigma^*_{ij})\Delta_{ij} \right|.
	\end{equation}
	Note that the union sum inequality and Lemma \ref{tail} imply that, with high probability, 
	\begin{equation}
		\max_{i,j} |\widetilde{S}_{ij}-\Sigma^*_{ij}| \leq  \max_{i,j} |S_{ij}-\Sigma^*_{ij}|+\max_{i,j}|E_{ij}| \leq c_1\sqrt{\frac{\log p}{n}} + c_2\frac{p^2\sqrt{\log (1/\delta)}}{\epsilon n},
	\end{equation}
	and hence term $\mbox{II}$ is bounded by
	\begin{equation}
		\mbox{II} \geq (-c_1\sqrt{\frac{\log p}{n}} - c_2\frac{p^2\sqrt{\log (1/\delta)}}{\epsilon n})\|\Delta\|_1,
	\end{equation}
	here $c_1, c_2$ are positive constants.
	
	Note that $\| \Theta^* + \Delta \| _1 = \| \Theta^*_{S_1} + \Delta_{S_1} \| _1 + \|\Theta^*_{S_1^C}\|_1 + \|\Delta_{S_1^C}\|_1$ and $\|\Theta^*\|_1 = \|\Theta^*_{S_1}\|_1+\|\Theta^*_{S_1^C}\|_1$. Then by triangular inequality, the third part 
	\begin{equation}
		\mbox{III} = \lambda(\| \Theta^*_{S_1} + \Delta_{S_1} \| _1 + \|\Delta_{S_1^C}\|_1 -  \|\Theta^*_{S_1}\|_1 ) \geq \lambda(\|\Delta_{S_1^C}\|_1-\|\Delta_{S_1}\|_1).
	\end{equation}
	
	Therefore, we have
	\begin{equation}
		G(\Delta) \geq \frac{1}{4}k^2\|\Delta\|_F^2 - (c_1\sqrt{\frac{\log p}{n}} + c_2\frac{p^2\sqrt{\log (1/\delta)}}{\epsilon n})\|\Delta\|_1 + \lambda(\|\Delta_{S_1^C}\|_1-\|\Delta_{S_1}\|_1).
	\end{equation}
	Now, take 
	\begin{equation}
		\lambda = \frac{1}{\alpha}(c_1\sqrt{\frac{\log p}{n}} + c_2\frac{p^2\sqrt{\log (1/\delta)}}{\epsilon n}), 
	\end{equation}
	where $\alpha$ belongs to $(0, 1)$. Then
	
	\begin{eqnarray}\label{lowerbound}
		G(\Delta) &\geq& \frac{1}{4}k^2\|\Delta\|_F^2 - (c_1\sqrt{\frac{\log p}{n}} + c_2\frac{p^2\sqrt{\log (1/\delta)}}{\epsilon n})(1-\frac{1}{\alpha})\|\Delta_{S_1^C}\|_1 \nonumber \\
		&-& (c_1\sqrt{\frac{\log p}{n}} + c_2\frac{p^2\sqrt{\log (1/\delta)}}{\epsilon n})(1+\frac{1}{\alpha})\|\Delta_{S_1}\|_1.
	\end{eqnarray}
	Since that the second term on the right-hand side of \eqref{lowerbound} is always negative and $\|\Delta_{S_1}\|_1 \leq \sqrt{s} \|\Delta_{S_1}\|_F \leq \sqrt{s} \|\Delta\|_F$, we then have
	\begin{eqnarray}
		G(\Delta)& \geq &\|\Delta\|_F^2 \left[\frac{1}{4}k^2 - (c_1\sqrt{\frac{s\log p}{n}} + c_2\frac{p^2\sqrt{s\log (1/\delta)}}{\epsilon n})(1+\frac{1}{\alpha})\|\Delta\|_F^{-1} \right]	\nonumber \\
		& = & 	\|\Delta\|_F^2 \left[\frac{1}{4}k^2 - \frac{c(1+\alpha)}{\alpha M} \right],
	\end{eqnarray}
	here $c=c_1 / c_2$.
	
	Consequently, $G(\Delta)>0$ for $M$ sufficiently large. This establishes the theorem.
\end{proof}

\section{Experiments} \label{experiments}

\subsection{Simulation}

In this subsection, we conduct a simulation study to evaluate the performance of our proposed differentially private algorithms. We consider four models as follows:

\begin{itemize}
    \item $Model\;1$. $\Theta=\frac{1}{n}WW^T$, where $W$ is a $p\times n$ matrix with $n=10000$ and each $w_{i,j}$ is drawn from $\mathcal{N}(0,1)$. This precision matrix is unstructured and non-sparse.
    \item $Model\;2$. Full model with $\theta_{i,j}=1$ if $i=j$ and $\theta_{i,j}=0.5$ otherwise. This precision matrix is structured and non-sparse.
    \item $Model\;3$. An AR(2) model with $\theta_{i,i}=0, \theta_{i,i-1}=\theta_{i-1,i}=0.5$ and $\theta_{i,i-2}=\theta_{i-2,i}=0.25$. This precision matrix is structured and sparse.
    \item $Model\;4$. The last model comes from  Cai et al. \cite{cai2011Aconstrained}. 
    Let $\Theta_0=A+\alpha I$, where each off-diagonal entry in $A$ is generated independently and equals 0.5 with probability 0.1 or 0 with probability 0.9. $\alpha$ is chosen such that the condition number (the ratio of the largest and the smallest eigenvalues of a matrix) of the matrix is equal to $p$. Finally, the matrix is standardized to have unit diagonal. This precision matrix is unstructured and sparse.
\end{itemize}

Firstly, we measure the difference between $\widehat{\Theta}_{priv}$ and $\widehat{\Theta}$, where $\widehat{\Theta}_{priv}$ denotes the output of our differentially private Algorithm \ref{alg:1} and Algorithm \ref{alg:2}, the corresponding $\widehat{\Theta}$ denotes the solution of the problem \eqref{gridge} solved by equation \eqref{rope} and the solution of the problem \eqref{glasso} solved by the ADMM algorithm. For each model, we generate an independent sample of size $n$ from a multivariate Gaussian distribution with mean zero and covariance matrix $\Sigma=\Theta^{-1}$, and preprocess the data matrix by normalizing the data matrix with the maximum $l_2$ norm to enforce the condition $\|x_i\|_2 \leq 1$. A five-fold cross-validation described by Bien and Tibshirani \cite{bien2011sparse} is used to choose the parameter such that the optimal value of $\lambda$ minimizes the negative log-likelihood \eqref{log-likelihood}, and we compute the differentially private estimations on the same tuning parameter $\lambda$ as the non-private algorithm. We fixed $p=100$ and consider different values of

\begin{table}
    \centering
    \scriptsize

    \vspace{0.2cm}
    \begin{tabular}{ccccccccccccccc}
	    \toprule

	    \multirow{2}{*}{$n$}& &\multirow{2}{*}{$\epsilon$}& &\multicolumn{3}{c}{Model1}& &\multicolumn{3}{c}{Model2}& &\multicolumn{3}{c}{Model3}\cr
	    \cmidrule(lr){5-7} \cmidrule(lr){9-11} \cmidrule(lr){13-15}  
	     & & & &$l_1\,norm$&$F\,norm$&$l_2\,norm$& &$l_1\,norm$&$F\,norm$&$l_2\,norm$& &$l_1\,norm$&$F\,norm$&$l_2\,norm$\cr
	    \hline
	    \multirow{12}{*}{$100$}\cr
	    & &\multirow{2}{*}{$0.1$}& &	2033.43	&	992.43	&	1952.80	&	&	2023.17	&	982.35	&	1937.40	&	&	2799.51	&	1397.73	&	2732.86	\cr
	    & &                      & &	(214.65)	&	(119.61)	&	(217.19)	&	&	(203.78)	&	(105.55)	&	(193.82)	&	&	(187.39)	&	(77.52)	&	(157.59)	\cr
	    & &\multirow{2}{*}{$0.3$}& &	676.69	&	329.80	&	90.76	&	&	673.34	&	326.44	&	89.84	&	&	932.06	&	464.65	&	128.87	\cr
	    & &                      & &	(71.54)	&	(39.83)	&	(11.44)	&	&	(67.87)	&	(35.14)	&	(10.15)	&	&	(62.48)	&	(25.81)	&	(7.55)	\cr
	    & &\multirow{2}{*}{$0.5$}& &	405.34	&	197.28	&	54.35	&	&	403.36	&	195.27	&	53.80	&	&	558.56	&	278.04	&	77.20	\cr
	    & &                      & &	(42.92)	&	(23.87)	&	(6.86)	&	&	(40.69)	&	(21.06)	&	(6.08)	&	&	(37.50)	&	(15.47)	&	(4.53)	\cr
	    & &\multirow{2}{*}{$0.8$}& &	252.70	&	122.74	&	33.88	&	&	251.50	&	121.48	&	33.53	&	&	348.46	&	173.07	&	48.14	\cr
	    & &                      & &	(26.82)	&	(14.89)	&	(4.28)	&	&	(25.40)	&	(13.14)	&	(3.80)	&	&	(23.45)	&	(9.65)	&	(2.83)	\cr
	    & &\multirow{2}{*}{$1.2$}& &	118.22	&	57.08	&	15.84	&	&	117.71	&	56.49	&	15.68	&	&	163.35	&	80.61	&	22.53	\cr
	    & &                      & &	(12.64)	&	(6.98)	&	(2.01)	&	&	(11.94)	&	(6.16)	&	(1.79)	&	&	(11.10)	&	(4.53)	&	(1.33)	\cr
	    & &\multirow{2}{*}{$2$}  & &	70.25	&	33.67	&	9.40	    &	&	69.98	&	33.33	&	9.31	&	&	97.30	&	47.65	&	13.40	\cr
	    & &                      & &	(7.58)	&	(4.16)	&	(1.20)	&	&	(7.13)	&	(3.67)	&	(1.07)	&	&	(6.70)	&	(2.70)	&	(0.80)	\cr
\\
	    \hline
	    \multirow{12}{*}{$200$}\cr
	    & &\multirow{2}{*}{$0.1$}& &	1640.76	&	773.12	&	1397.26	&	&	1639.64	&	782.21	&	1326.40	&	&	2634.83	&	1315.41	&	2277.36	\cr
	    & &                      & &	(102.37)	&	(39.26)	&	(76.36)	&	&	(95.85)	&	(38.54)	&	(69.81)	&	&	(171.50)	&	(82.98)	&	(156.16)	\cr
	    & &\multirow{2}{*}{$0.3$}& &	545.36	&	256.42	&	70.62	&	&	544.96	&	259.44	&	71.45	&	&	876.13	&	436.65	&	121.23	\cr
	    & &                      & &	(34.07)	&	(13.07)	&	(4.00)	&	&	(31.92)	&	(12.84)	&	(3.85)	&	&	(57.08)	&	(27.64)	&	(8.52)	\cr
	    & &\multirow{2}{*}{$0.5$}& &	326.27	&	153.08	&	42.24	&	&	326.02	&	154.89	&	42.74	&	&	524.37	&	260.90	&	72.56	\cr
	    & &                      & &	(20.41)	&	(7.84)	&	(2.40)	&	&	(19.14)	&	(7.69)	&	(2.31)	&	&	(34.21)	&	(16.57)	&	(5.11)	\cr
	    & &\multirow{2}{*}{$0.8$}& &	203.03	&	94.97	&	26.28	&	&	202.85	&	96.09	&	26.59	&	&	326.49	&	162.05	&	45.18	\cr
	    & &                      & &	(12.73)	&	(4.89)	&	(1.50)	&	&	(11.96)	&	(4.80)	&	(1.44)	&	&	(21.36)	&	(10.35)	&	(3.19)	\cr
	    & &\multirow{2}{*}{$1.2$}& &	94.44	&	43.78	&	12.22	&	&	94.34	&	44.31	&	12.37	&	&	152.20	&	74.99	&	21.06	\cr
	    & &                      & &	(5.96)	&	(2.29)	&	(0.70)	&	&	(5.63)	&	(2.25)	&	(0.68)	&	&	(10.12)	&	(4.86)	&	(1.50)	\cr
	    & &\multirow{2}{*}{$2$}  & &	55.70	&	25.55	&	7.20	    &	&	55.63	&	25.86	&	7.29	&	&	90.01	&	43.97	&	12.46	\cr
	    & &                      & &	(3.56)	&	(1.37)	&	(0.42)	&	&	(3.37)	&	(1.34)	&	(0.41)	&	&	(6.08)	&	(2.90)	&	(0.90)	\cr
\\
	    \hline
	    \multirow{12}{*}{$400$}\cr
	    & &\multirow{2}{*}{$0.1$}& &	1328.60	&	578.15	&	1040.17	&	&	1347.95	&	592.49	&	859.38	&	&	2589.53	&	1208.54	&	1934.99	\cr
	    & &                      & &	(73.95)	&	(31.23)	&	(59.22)	&	&	(75.70)	&	(31.00)	&	(45.65)	&	&	(105.15)	&	(39.34)	&	(62.87)	\cr
	    & &\multirow{2}{*}{$0.3$}& &	440.65	&	191.07	&	52.75	&	&	447.01	&	195.82	&	54.04	&	&	859.69	&	400.10	&	111.19	\cr
	    & &                      & &	(24.63)	&	(10.39)	&	(3.14)	&	&	(25.17)	&	(10.31)	&	(3.03)	&	&	(35.10)	&	(13.00)	&	(4.11)	\cr
	    & &\multirow{2}{*}{$0.5$}& &	263.04	&	113.67	&	31.49	&	&	266.81	&	116.50	&	32.26	&	&	513.76	&	238.43	&	66.43	\cr
	    & &                      & &	(14.76)	&	(6.22)	&	(1.88)	&	&	(15.07)	&	(6.18)	&	(1.82)	&	&	(21.15)	&	(7.73)	&	(2.46)	\cr
	    & &\multirow{2}{*}{$0.8$}& &	163.13	&	70.14	&	19.53	&	&	165.43	&	71.89	&	20.00	&	&	319.19	&	147.51	&	41.25	\cr
	    & &                      & &	(9.21)	&	(3.88)	&	(1.17)	&	&	(9.39)	&	(3.85)	&	(1.13)	&	&	(13.34)	&	(4.77)	&	(1.53)	\cr
	    & &\multirow{2}{*}{$1.2$}& &	75.09	&	31.83	&	9.00 	&	&	76.12	&	32.63	&	9.21	&	&	147.72	&	67.46	&	19.08	\cr
	    & &                      & &	(4.32)	&	(1.81)	&	(0.55)	&	&	(4.37)	&	(1.80)	&	(0.53)	&	&	(6.45)	&	(2.17)	&	(0.72)	\cr
	    & &\multirow{2}{*}{$2$}  & &	43.70	&	18.21	&	5.24 	&	&	44.27	&	18.67	&	5.37	&	&	86.54	&	38.97	&	11.18	\cr
	    & &                      & &	(2.60)	&	(1.07)	&	(0.33)	&	&	(2.57)	&	(1.06)	&	(0.32)	&	&	(3.94)	&	(1.24)	&	(0.42)	\cr
\\
	    \bottomrule
	\end{tabular}
	\caption{Simulation results for differentially private Algorithm \ref{alg:1} in Model 1-3. Average (standard error) matrix losses over 50 replications.\label{tab:1}}
\end{table}

\begin{table}
    \centering
    \scriptsize

    \vspace{0.2cm}
    \begin{tabular}{ccccccccccccccc}
        \toprule
    
        \multirow{2}{*}{$n$}& &\multirow{2}{*}{$\epsilon$}& &\multicolumn{3}{c}{Model2}& &\multicolumn{3}{c}{Model3}& &\multicolumn{3}{c}{Model4}\cr
        \cmidrule(lr){5-7} \cmidrule(lr){9-11} \cmidrule(lr){13-15}  
         & & & &$l_1\,norm$&$F\,norm$&$l_2\,norm$& &$l_1\,norm$&$F\,norm$&$l_2\,norm$& &$l_1\,norm$&$F\,norm$&$l_2\,norm$\cr
        \hline

        \multirow{12}{*}{$100$}\cr
         & &\multirow{2}{*}{$0.1$}& &	82.90	&	9.45	&	26.23	&	&	80.19	&	9.45	&	26.09	&	&	67.15	&	9.42	&	26.02	\cr
         & &                      & &	(2.78)	&	(0.17)	&	(0.83)	&	&	(2.72)	&	(0.16)	&	(0.83)	&	&	(3.38)	&	(0.16)	&	(0.83)	\cr
         & &\multirow{2}{*}{$0.3$}& &	27.27	&	2.98	&	0.83	&	&	26.45	&	2.99	&	0.83	&	&	22.18	&	2.98	&	0.83	\cr
         & &                      & &	(0.94)	&	(0.05)	&	(0.03)	&	&	(0.92)	&	(0.05)	&	(0.03)	&	&	(1.14)	&	(0.05)	&	(0.03)	\cr
         & &\multirow{2}{*}{$0.5$}& &	16.31	&	1.75	&	0.48	&	&	15.87	&	1.75	&	0.48	&	&	13.32	&	1.75	&	0.48	\cr
         & &                      & &	(0.57)	&	(0.03)	&	(0.02)	&	&	(0.56)	&	(0.03)	&	(0.02)	&	&	(0.69)	&	(0.03)	&	(0.02)	\cr
         & &\multirow{2}{*}{$0.8$}& &	10.22	&	1.08	&	0.28	&	&	9.98	&	1.09	&	0.29	&	&	8.39	&	1.09	&	0.29	\cr
         & &                      & &	(0.36)	&	(0.02)	&	(0.01)	&	&	(0.35)	&	(0.02)	&	(0.01)	&	&	(0.43)	&	(0.02)	&	(0.01)	\cr
         & &\multirow{2}{*}{$1.2$}& &	4.85	    &	0.52	&	0.12	&	&	4.80	&	0.52	&	0.12	&	&	4.05	&	0.53	&	0.12	\cr
         & &                      & &	(0.17)	&	(0.01)	&	(0.00)	&	&	(0.17)	&	(0.01)	&	(0.00)	&	&	(0.20)	&	(0.01)	&	(0.00)	\cr
         & &\multirow{2}{*}{$2$}  & &	2.88 	&	0.31	&	0.07	&	&	2.89	&	0.32	&	0.07	&	&	2.46	&	0.32	&	0.07	\cr
         & &                      & &	(0.10)	&	(0.00)	&	(0.00)	&	&	(0.10)	&	(0.00)	&	(0.00)	&	&	(0.12)	&	(0.00)	&	(0.00)	\cr
         \\
         \hline

    	\multirow{12}{*}{$200$}\cr
    	 & &\multirow{2}{*}{$0.1$}& &	44.10	&	4.91	&	13.68	&	&	42.76	&	4.91	&	13.61	&	&	36.09	&	4.90	&	13.59	\cr
	     & &                      & &	(1.49)	&	(0.09)	&	(0.44)	&	&	(1.40)	&	(0.09)	&	(0.44)	&	&	(1.67)	&	(0.09)	&	(0.44)	\cr
 	     & &\multirow{2}{*}{$0.3$}& &	14.61	&	1.55	&	0.42	&	&	14.23	&	1.56	&	0.42	&	&	12.03	&	1.56	&	0.42	\cr
    	 & &                      & &	(0.51)	&	(0.03)	&	(0.01)	&	&	(0.48)	&	(0.03)	&	(0.01)	&	&	(0.57)	&	(0.03)	&	(0.01)	\cr
     	 & &\multirow{2}{*}{$0.5$}& &	8.83 	&	0.93	&	0.24	&	&	8.63	&	0.94	&	0.24	&	&	7.31	&	0.94	&	0.24	\cr
 	     & &                      & &	(0.31)	&	(0.01)	&	(0.01)	&	&	(0.29)	&	(0.01)	&	(0.01)	&	&	(0.35)	&	(0.01)	&	(0.01)	\cr
	     & &\multirow{2}{*}{$0.8$}& &	5.57 	&	0.59	&	0.14	&	&	5.48	&	0.59	&	0.14	&	&	4.65	&	0.59	&	0.14	\cr
	     & &                      & &	(0.20)	&	(0.01)	&	(0.00)	&	&	(0.18)	&	(0.01)	&	(0.00)	&	&	(0.22)	&	(0.01)	&	(0.00)	\cr
	     & &\multirow{2}{*}{$1.2$}& &	2.61	    &	0.28	&	0.06	&	&	2.62	&	0.29	&	0.06	&	&	2.23	&	0.29	&	0.06	\cr
	     & &                      & &	(0.09)	&	(0.00)	&	(0.00)	&	&	(0.08)	&	(0.00)	&	(0.00)	&	&	(0.10)	&	(0.00)	&	(0.00)	\cr
	     & &\multirow{2}{*}{$2$}  & &	1.52	    &	0.16	&	0.03	&	&	1.56	&	0.17	&	0.04	&	&	1.34	&	0.17	&	0.04	\cr
	     & &                      & &	(0.05)	&	(0.00)	&	(0.00)	&	&	(0.05)	&	(0.00)	&	(0.00)	&	&	(0.06)	&	(0.00)	&	(0.00)	\cr
	     \\
  	    \hline
    
	    \multirow{12}{*}{$400$}\cr
	     & &\multirow{2}{*}{$0.1$}& &	23.31	&	2.52	&	6.98	&	&	22.74	&	2.52	&	6.95	&	&	19.29	&	2.52	&	6.94	\cr
	     & &                      & &	(0.80)	&	(0.04)	&	(0.23)	&	&	(0.79)	&	(0.04)	&	(0.23)	&	&	(0.79)	&	(0.04)	&	(0.23)	\cr
	     & &\multirow{2}{*}{$0.3$}& &  	7.87	    &	0.83	&	0.21	&	&	7.72	&	0.83	&	0.21	&	&	6.57	&	0.83	&	0.21	\cr
	     & &                      & &	(0.28)	&	(0.01)	&	(0.01)	&	&	(0.27)	&	(0.01)	&	(0.01)	&	&	(0.27)	&	(0.01)	&	(0.01)	\cr
	     & &\multirow{2}{*}{$0.5$}& &	4.79 	&	0.50	&	0.12	&	&	4.72	&	0.51	&	0.12	&	&	4.02	&	0.51	&	0.12	\cr
	     & &                      & &	(0.16)	&	(0.01)	&	(0.00)	&	&	(0.16)	&	(0.01)	&	(0.00)	&	&	(0.16)	&	(0.01)	&	(0.00)	\cr
	     & &\multirow{2}{*}{$0.8$}& &	3.00	    &	0.32	&	0.07	&	&	2.98	&	0.32	&	0.07	&	&	2.55	&	0.32	&	0.07	\cr
	     & &                      & &	(0.10)	&	(0.00)	&	(0.00)	&	&	(0.10)	&	(0.00)	&	(0.00)	&	&	(0.10)	&	(0.00)	&	(0.00)	\cr
	     & &\multirow{2}{*}{$1.2$}& &	1.37	    &	0.15	&	0.03	&	&	1.40	&	0.15	&	0.03	&	&	1.21	&	0.15	&	0.03	\cr
	     & &                      & &	(0.05)	&	(0.00)	&	(0.00)	&	&	(0.05)	&	(0.00)	&	(0.00)	&	&	(0.05)	&	(0.00)	&	(0.00)	\cr
	     & &\multirow{2}{*}{$2$}  & &	0.79	    &	0.09	&	0.02	&	&	0.83	&	0.09	&	0.02	&	&	0.72	&	0.09	&	0.02	\cr
	     & &                      & &	(0.03)	&	(0.00)	&	(0.00)	&	&	(0.03)	&	(0.00)	&	(0.00)	&	&	(0.03)	&	(0.00)	&	(0.00)	\cr
	     \\
	    \bottomrule
	\end{tabular}
    \caption{Simulation results for differentially private Algorithm \ref{alg:2} in Model 2-4. Average (standard error) matrix losses over 50 replications.\label{tab:2}}
\end{table}
$n = 100, 200, 400$ and change $\epsilon = 0.3, 0.5, 0.8, 1.2, 2$. For each experiments, we set $\delta = 1/n$ and replicate 50 times. In addition, the preselected penalty coefficient is set to $\rho=100$, We measure the estimation quality by the following matrix norms: the matrix $l_1$ norm $\|\widehat{\Theta}_{priv} - \widehat{\Theta}\| _1 / \|\widehat{\Theta}\| _1$, the Frobenius norm $\|\widehat{\Theta}_{priv} - \widehat{\Theta}\| _F / \|\widehat{\Theta}\| _F$, and the spectral norm $\|\widehat{\Theta}_{priv} - \widehat{\Theta}\| _2 / \|\widehat{\Theta}\| _2$. Tabel \ref{tab:1} and Tabel \ref{tab:2} report the averages and standard errors of these losses.

As shown in Tabel \ref{tab:1} and Tabel \ref{tab:2}, the high sample size and high privacy budget setting will have smaller matrix norm losses. First, we study how the loss of our differentially private algorithms is affected by $\epsilon$. As $\epsilon$ increase, these losses are reduced in both structured or unstructured, sparse or non-sparse models. This is in line with our additive-noise algorithm, where noise level is inversely proportional to privacy parameter, that is, larger $\epsilon$ means less noise and more utility. Next, we investigate the performance of our differentially private algorithms when the sample size varies. As the sample size $n$ increases while $\epsilon$ being fixed, all three losses show increasing estimation performance.

\begin{figure*}{}
    \centering
    \subfigure[Model 3]{\includegraphics[height=6cm,width=6cm,angle=0]{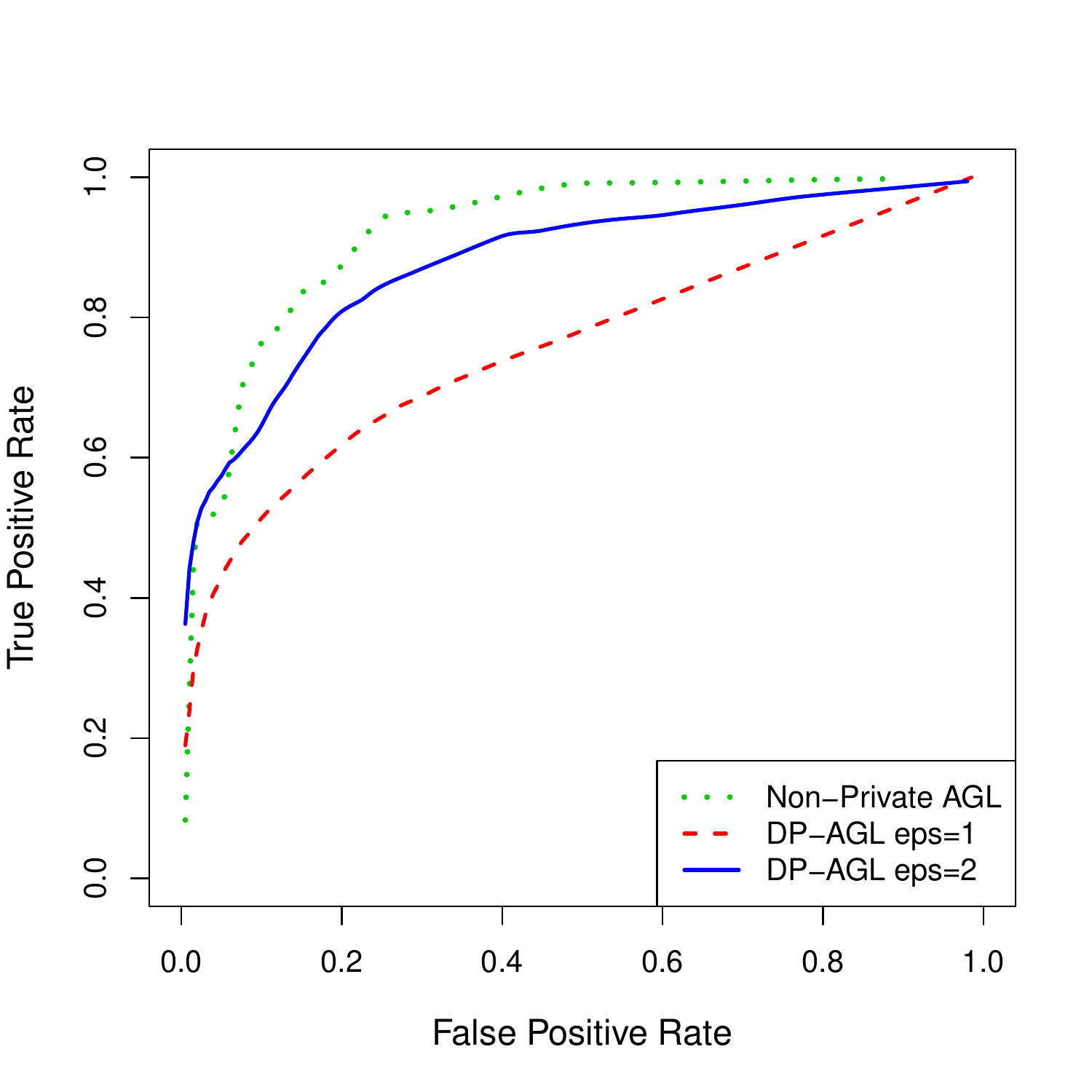}}
    \hspace{0.1\textwidth}
    \subfigure[Model 4]{\includegraphics[height=6cm,width=6cm,angle=0]{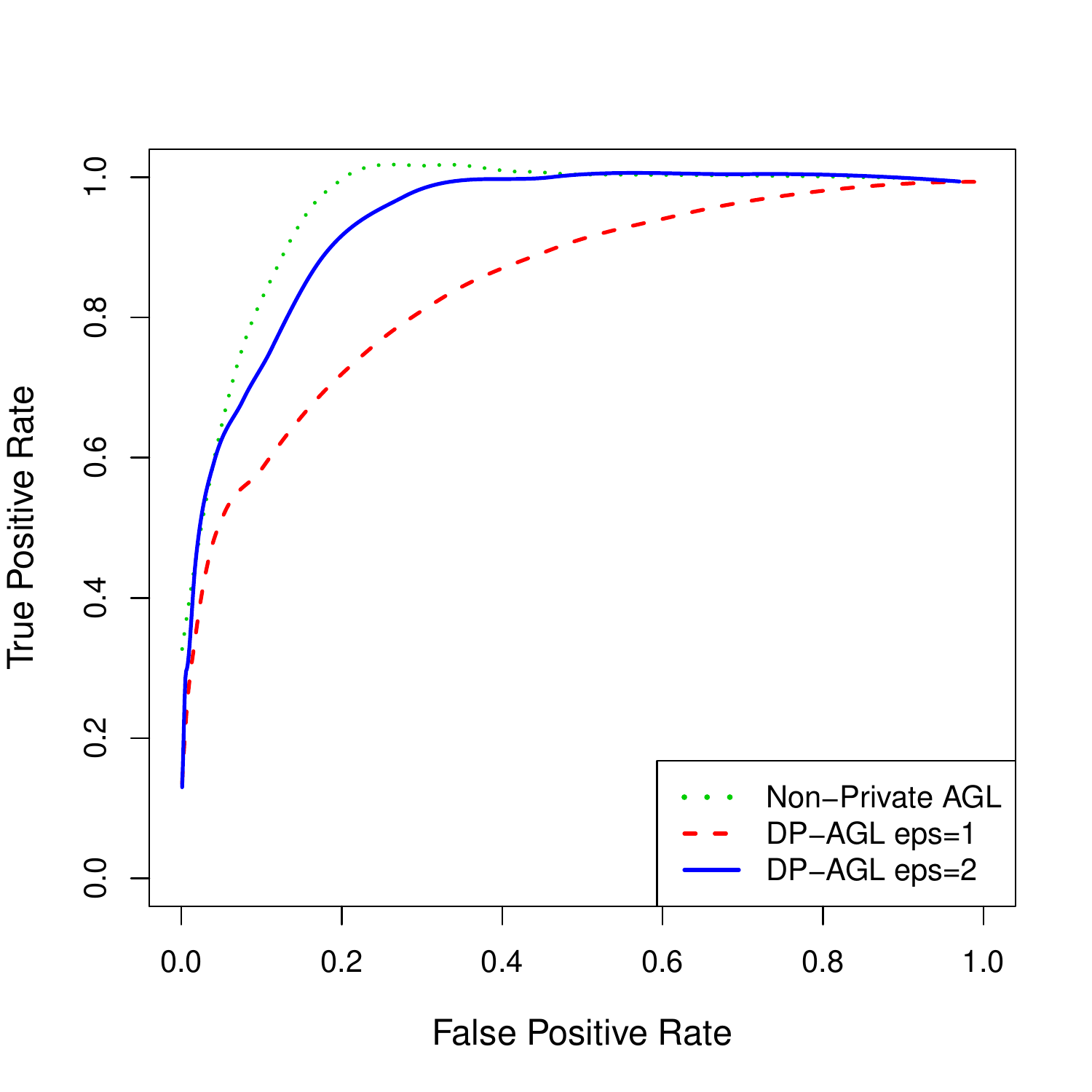}}
    \caption{ROC curves for varying levels of privacy parameter $\epsilon$, compared with the non-private algorithm. }
    \label{fig:1}
\end{figure*}

Secondly, focus on the structural learning of the Gaussian graphical model, we test the ability of our differentially private Algorithm \ref{alg:2} to recover the support of the precision matrix by ROC curves. ROC curves reflect the overall selection performance of each method as the tuning parameter varies, in which the true positive rate (TPR) is plotted against the false positive rate (FPR). For Model 3 and Model 4, we generate a sample of size $n=2000$ and preprocess the data matrix. We fixed $p=20, \delta= 0.001, \rho=100$ and let $\epsilon$ vary in $\{1,2\}$. For each setting, we replicate 50 times. Smoothed average ROC curves are shown in Figure \ref{fig:1}. From Figure \ref{fig:1}, we see that the model selection ability of our differentially private Algorithm \ref{alg:2} is almost comparable to the non-private algorithm with proper parameters setting.

\subsection{Application to real data}

\subsubsection{Classification of Ionosphere Data}

The first example deals with the Ionosphere data from UCI repository \cite{dua2017uci}. This radar data was collected by a system in Goose Bay, Labrador. This system consists of a phased array of 16 high-frequency antennas. The targets were free electrons in the ionosphere. ``Good'' radar returns are those showing evidence of some type of structure in the ionosphere. ``Bad'' returns are those that do not, their signals pass through the ionosphere. The data contains 351 observations with 34 variables. The response is labelled as 1 for ``Good'' and $-1$ for ``Bad''. As a classification problem, we apply linear discriminant analysis (LDA) with an differentially private precision matrix estimate. The purpose here is to illustrate how the ridge estimation of the precision matrix under different privacy parameters $\epsilon$ can affect the classification performance of LDA. We randomly divide the data into a training set, a validation set and a test set. The training set is used to estimate the precision matrix. The validation set is used to choose the tuning parameter for the method. The misclassification error is computed based on the test set. The sizes of the training set and the validation set are chosen to be n = 40. We set $\delta=0.001$ and repeat this procedure 50 times. The misclassification errors are present Figure \ref{fig:2}. Figure \ref{fig:2} shows that there is a large gap between the result of the differentially private algorithm and the result of the non-private algorithm when $\epsilon$ is small, but the gap becomes smaller as $\epsilon$ increases.

\begin{figure} 
    \centering
    \includegraphics[height=4.5cm,width=12cm,angle=0]{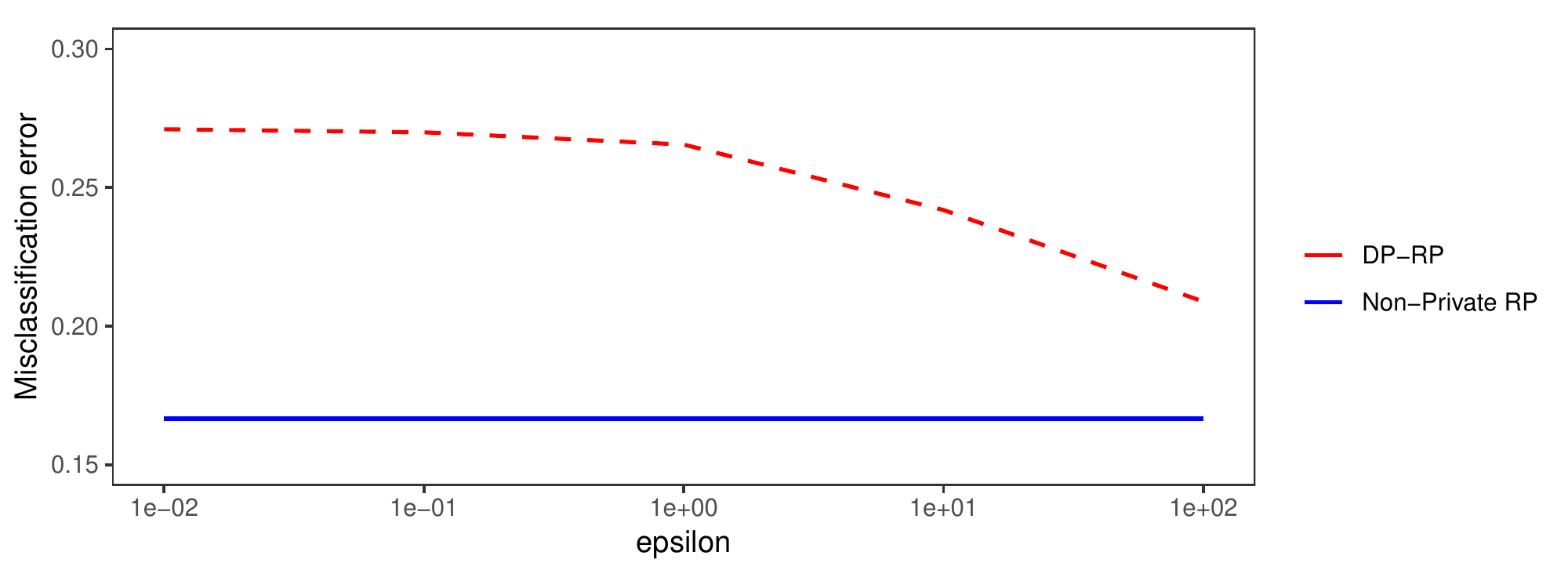}
    \caption{The misclassification errors of LDA}
    \label{fig:2}
\end{figure}

\subsubsection{Cell signalling data}

In this subsection, we apply our differentially private Algorithm \ref{alg:2} to the cell signalling data \cite{sachs2005causal} to evaluate its performance. The dataset containing 7466 cells, with flow cytometry measurements of 11 phosphorylated proteins and phospholipids, i.e., $n=7466$ and $p = 11$. They also provided a causal protein-signaling network, shown in Figure \ref{fig:3}a. Friedman et al. \cite{friedman2008sparse} used to apply graphical lasso to this data.

We compare the performance of our differentially private Algorithm \ref{alg:2} and non-private algorithm in reconstructing network edges by the ROC curve. To remove the randomness of the additive noise, we replicate 50 times. We fixed preselected penalty coefficient $\rho=100$ and privacy parameter $\delta=0.001$. The resulting ROC curve is shown in Figure \ref{fig:3}b, where the network in Figure \ref{fig:3}a was used as a benchmark (here we regard it as an undirected graph). The ROC curve shows that our differential privacy Algorithm \ref{alg:2} has high utility. We randomly show reconstructed networks with different tuning parameter $\lambda$ in Figure \ref{fig:4}.

\begin{figure*}[h]
    \centering
    \subfigure[]{\includegraphics[height=6cm,width=6cm,angle=0,trim=35 35 35 35,clip]{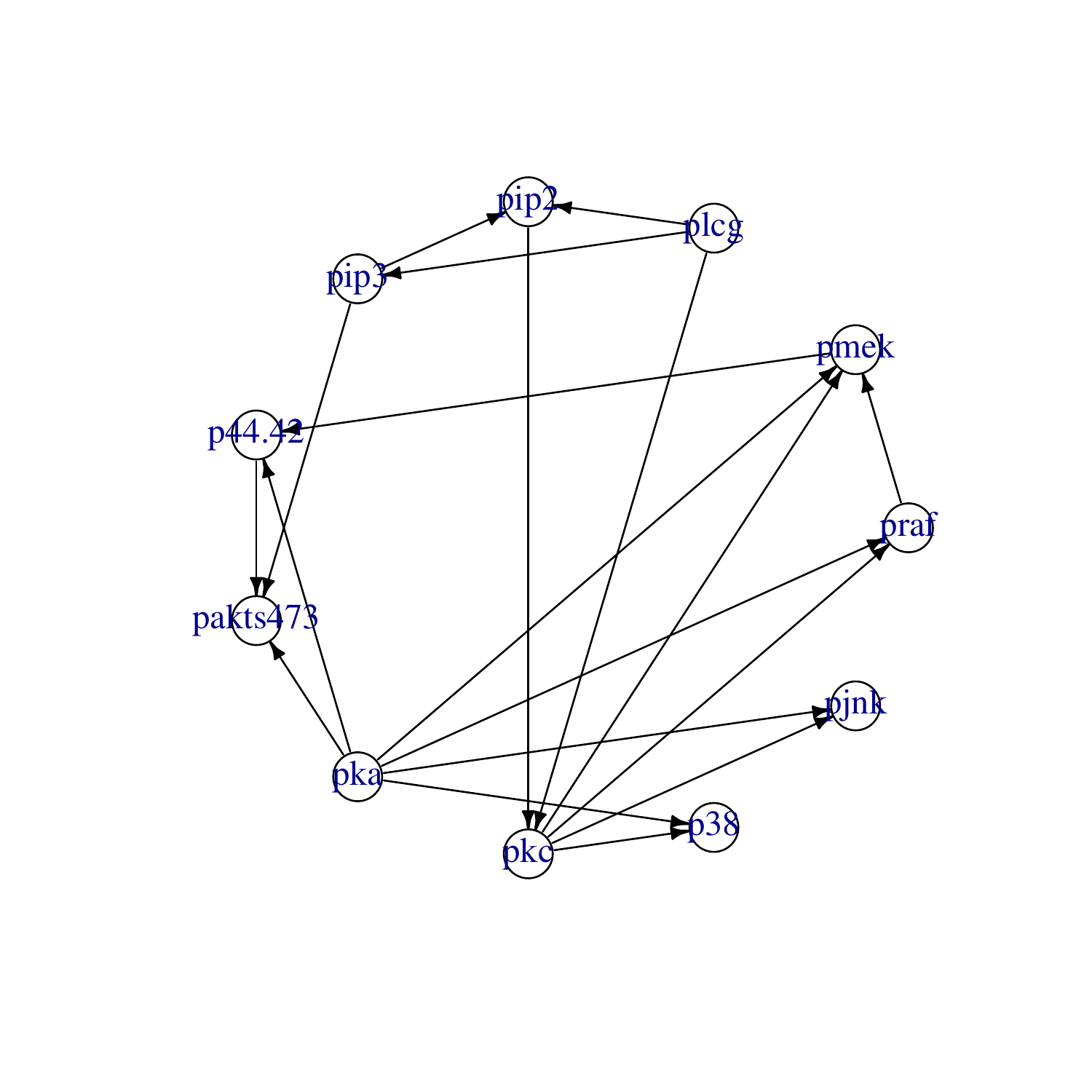}}
    \hspace{0.1\textwidth}
    \subfigure[]{\includegraphics[height=6cm,width=6cm,angle=0]{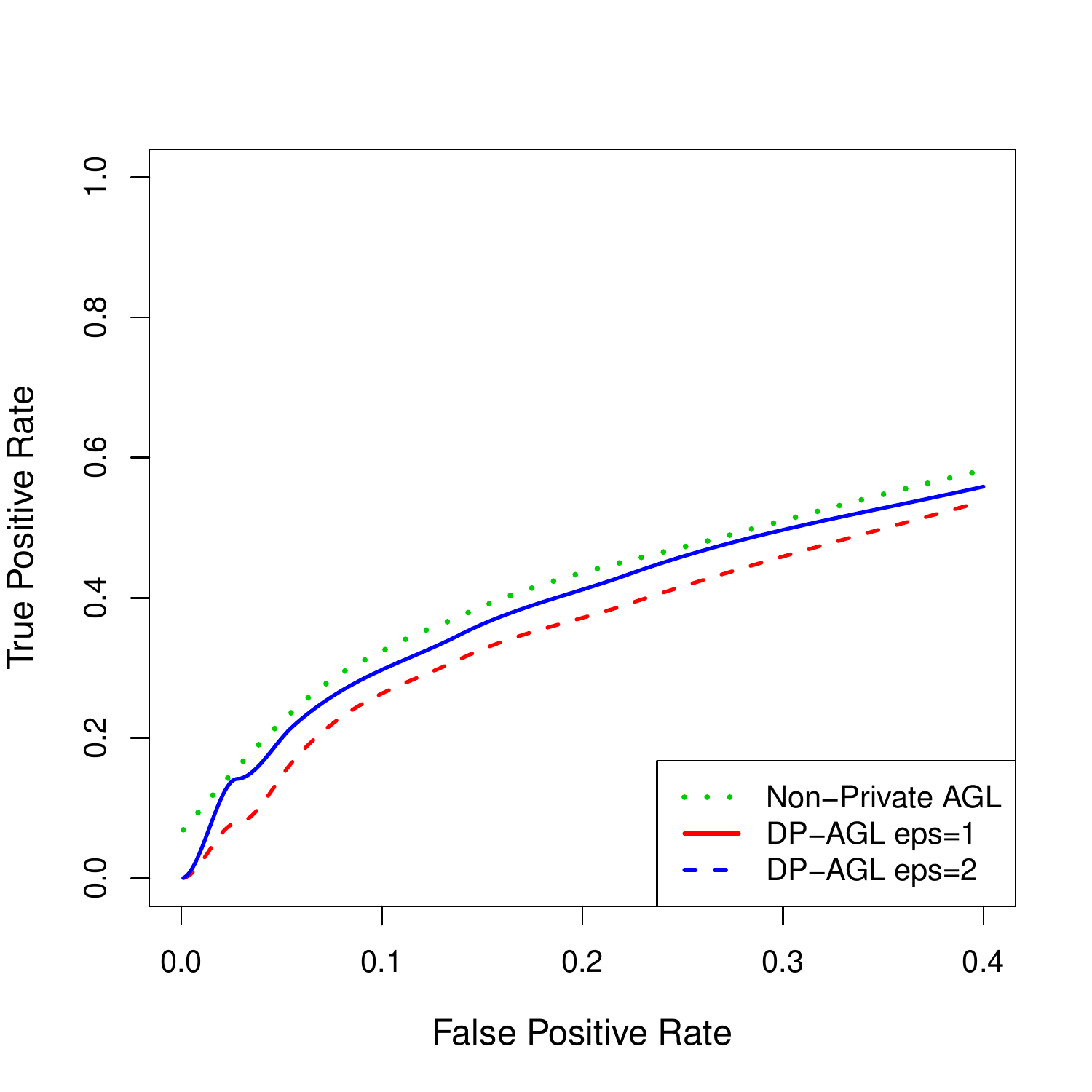}}
    \caption{(a)Directed acyclic graph from cell-signaling data, from Sachs et al. \cite{sachs2005causal}. 
    (b)ROC curves for varying levels of privacy parameter $\epsilon$, compared with the non-private algorithm.}
    \label{fig:3}
\end{figure*}

\begin{figure*}[!h]
    \centering
    \subfigure[]{\includegraphics[height=4.5cm,width=4.5cm,angle=0,trim=40 40 40 40,clip]{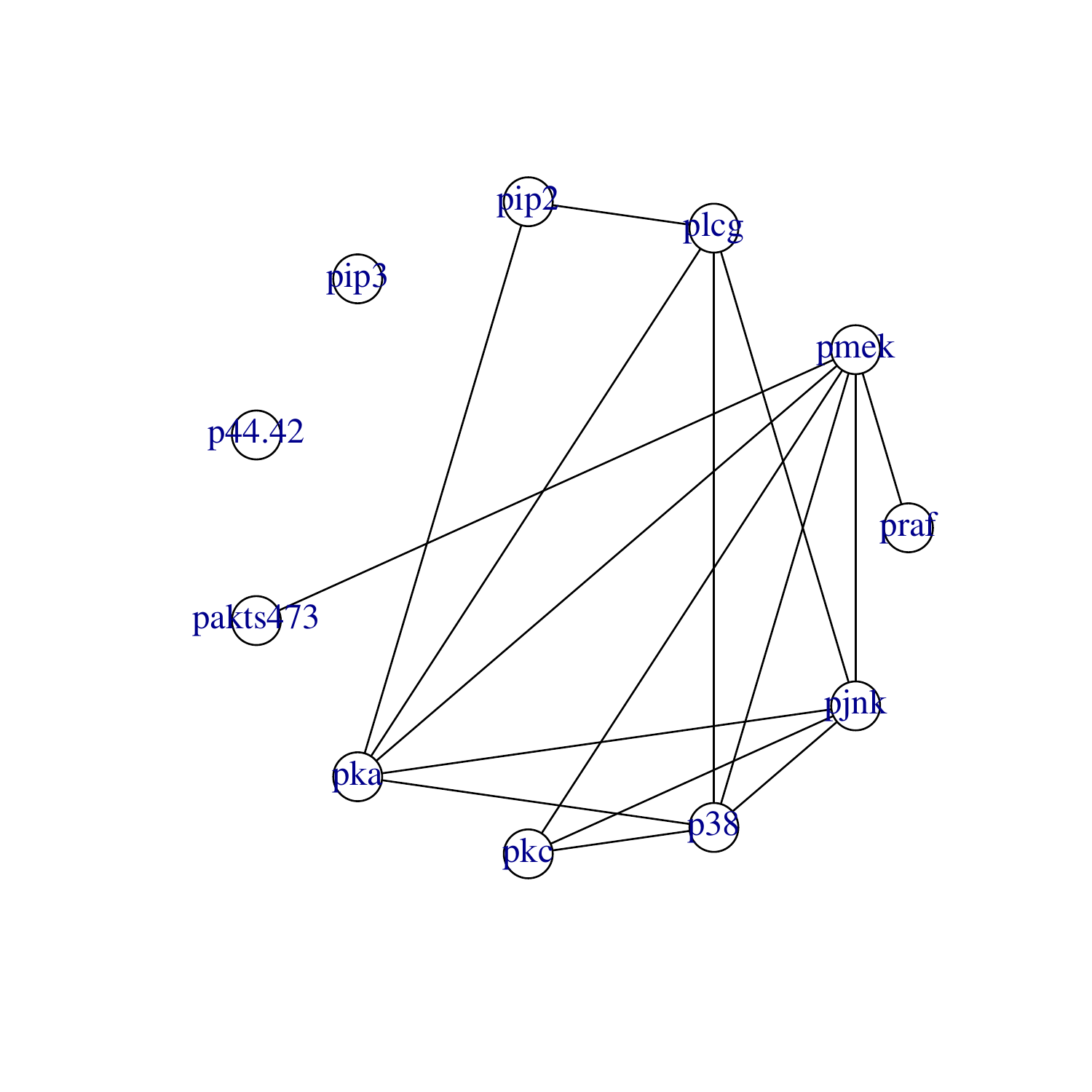}}
    \hspace{0.01\textwidth}
    \subfigure[]{\includegraphics[height=4.5cm,width=4.5cm,angle=0,trim=40 40 40 40,clip]{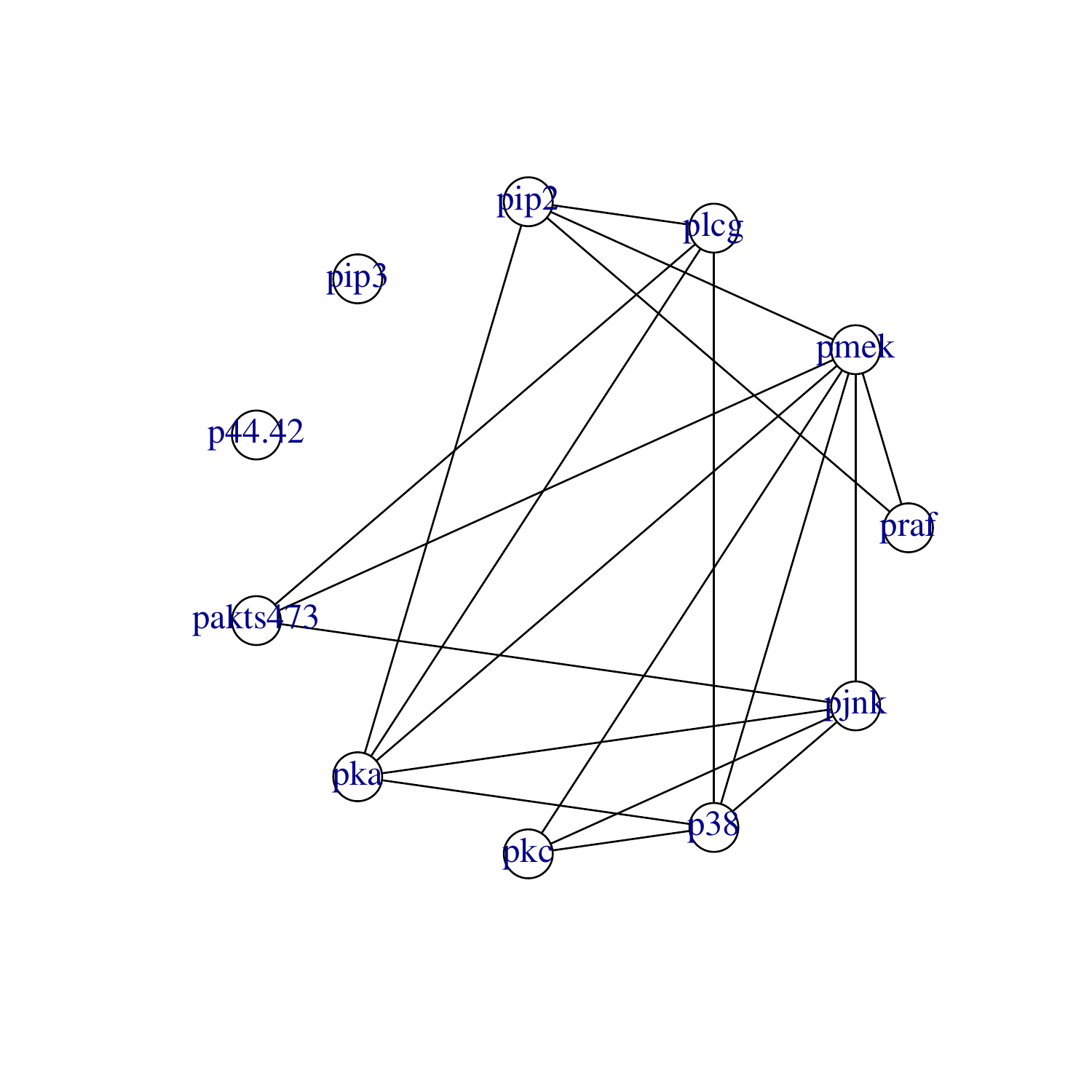}}
    \hspace{0.01\textwidth}    
    \subfigure[]{\includegraphics[height=4.5cm,width=4.5cm,angle=0,trim=40 40 40 40,clip]{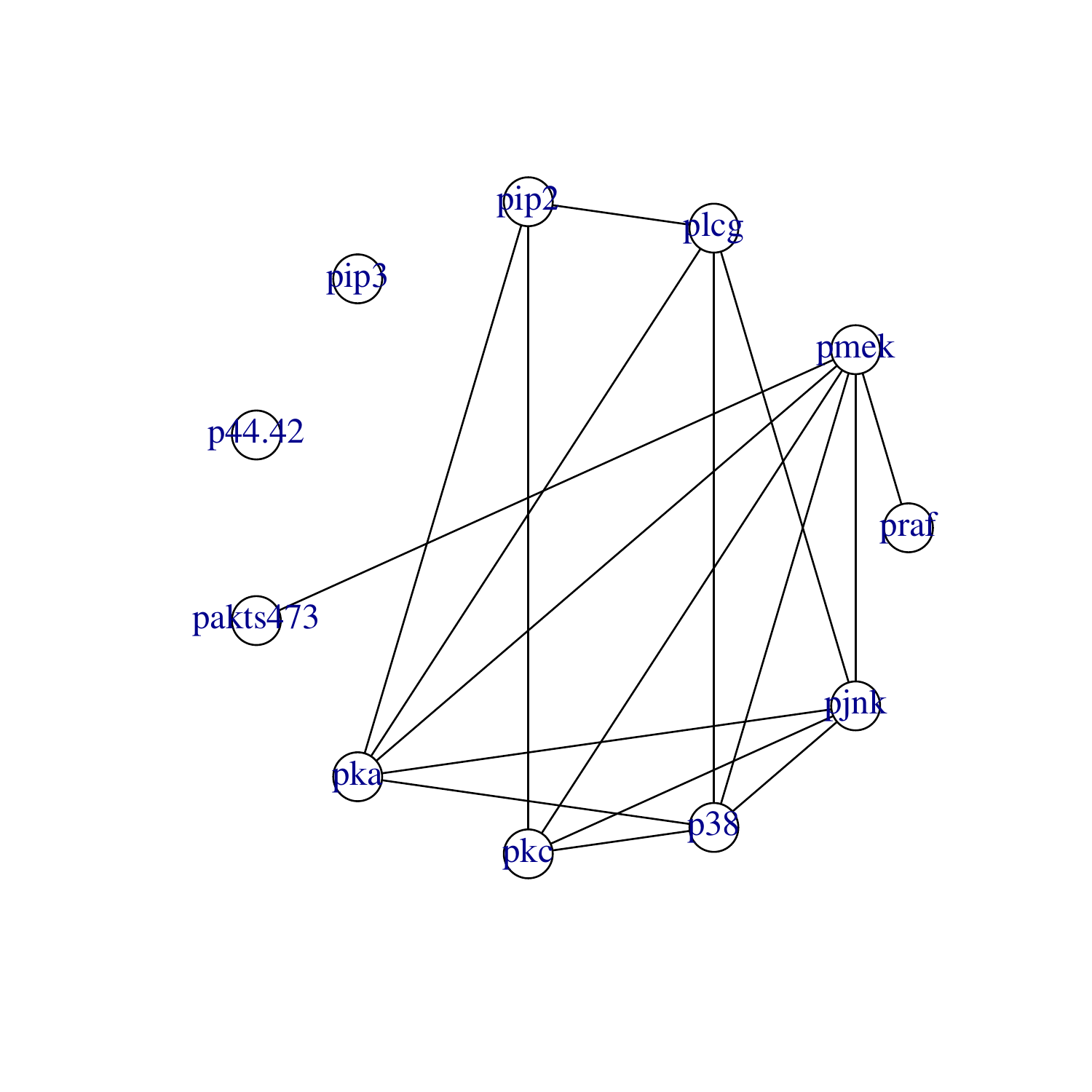}}
\\
    \subfigure[]{\includegraphics[height=4.5cm,width=4.5cm,angle=0,trim=40 40 40 40,clip]{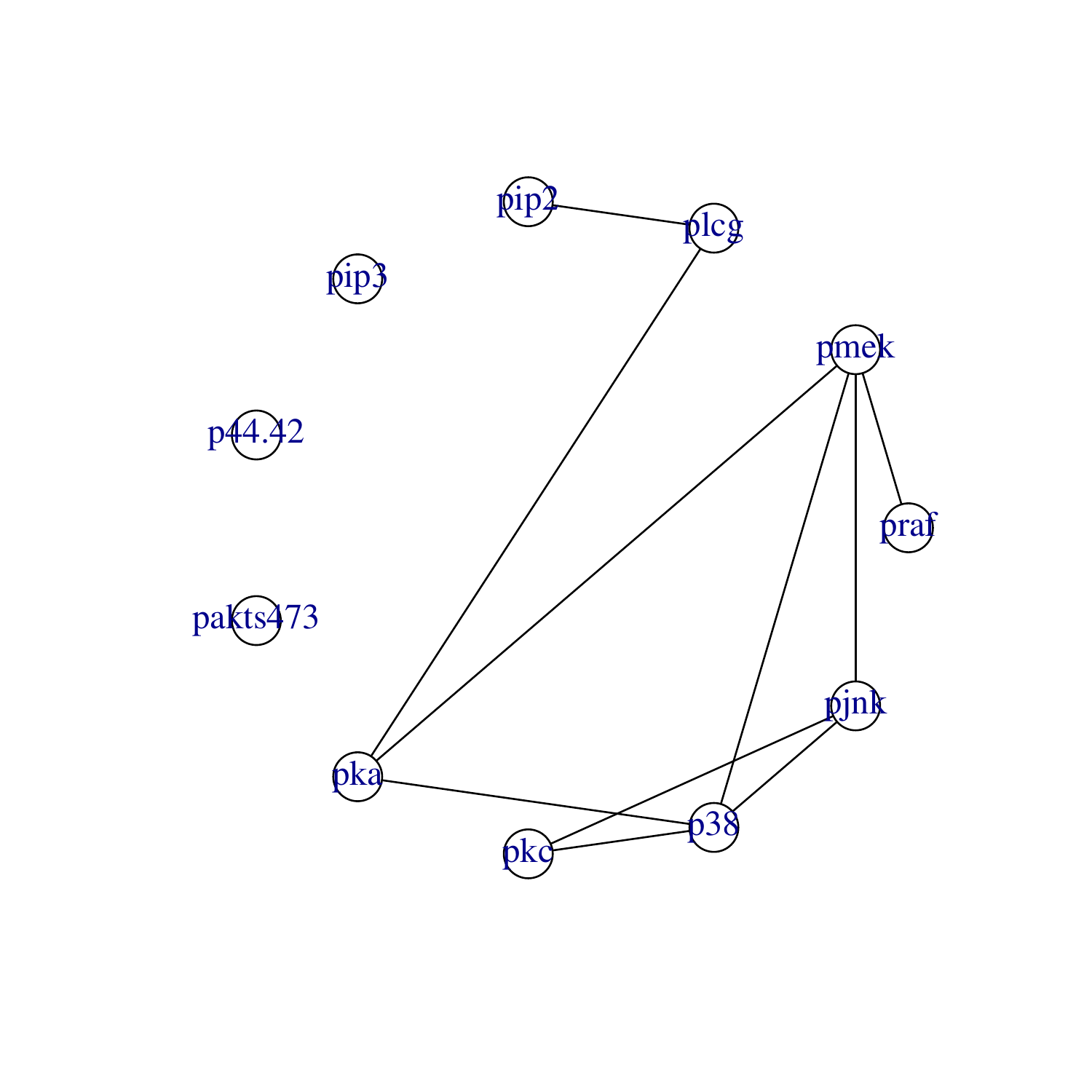}}
    \hspace{0.01\textwidth}
    \subfigure[]{\includegraphics[height=4.5cm,width=4.5cm,angle=0,trim=40 40 40 40,clip]{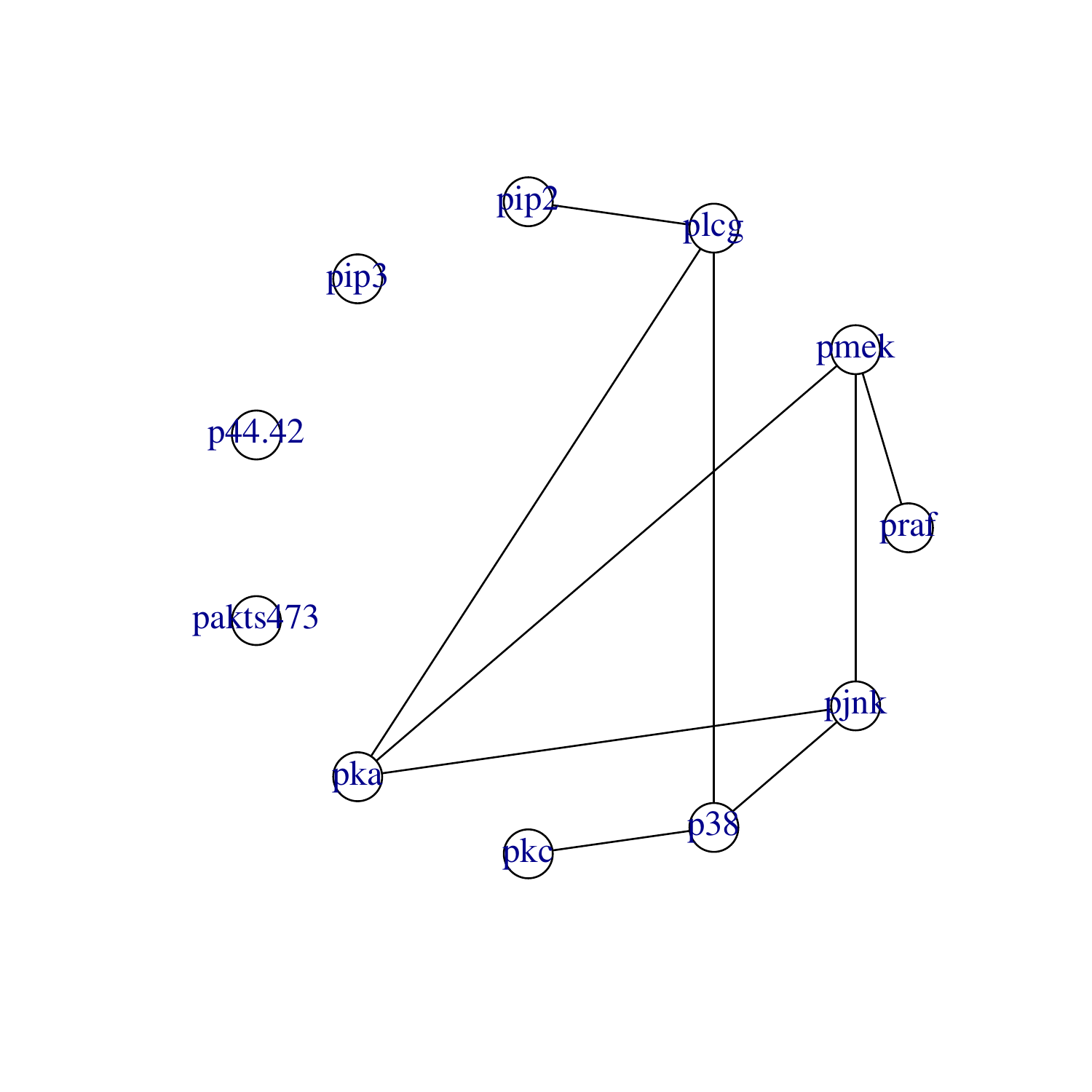}}
    \hspace{0.01\textwidth}    
    \subfigure[]{\includegraphics[height=4.5cm,width=4.5cm,angle=0,trim=40 40 40 40,clip]{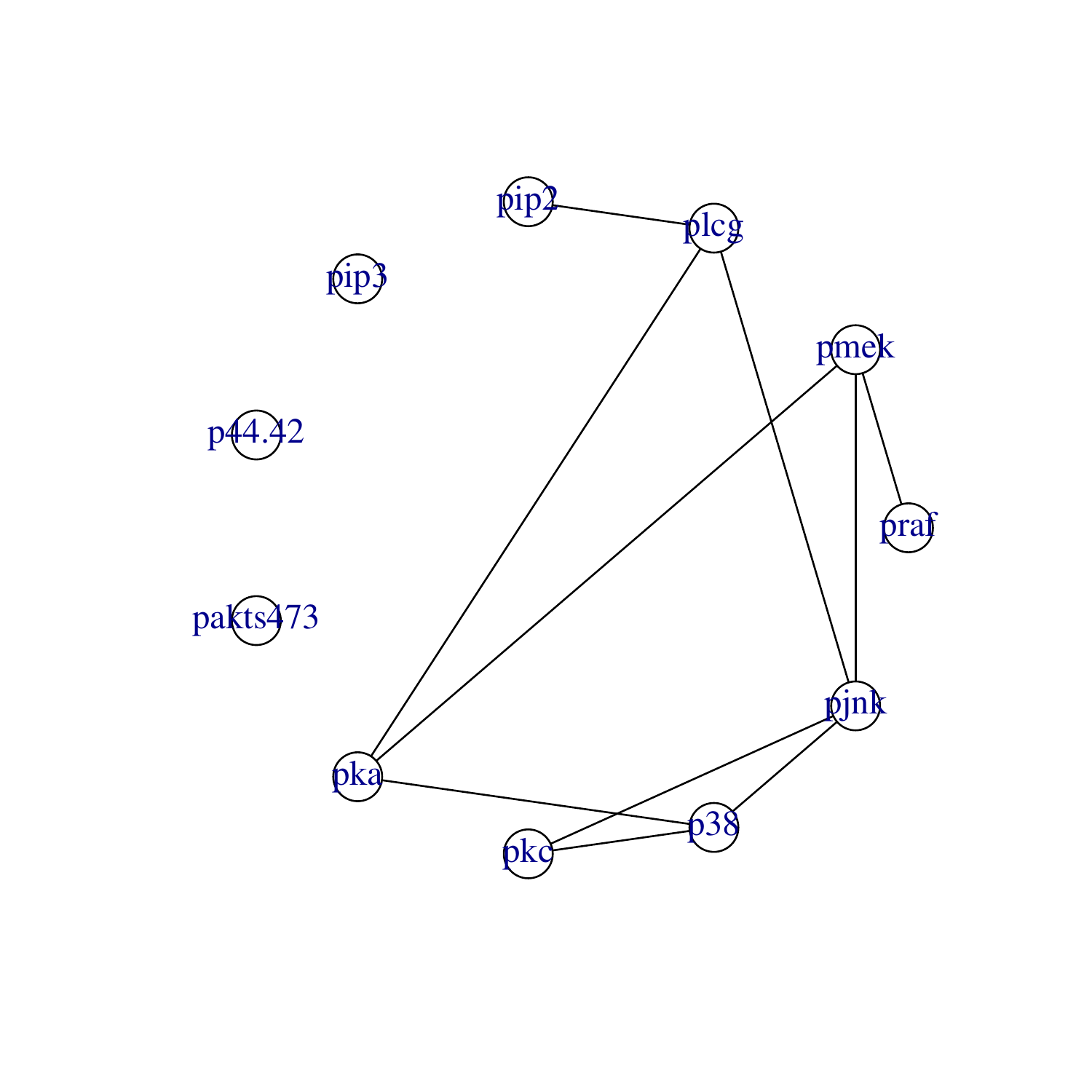}}
    \caption{Undirected graphs from DP-ALG with different values of tuning parameter $\lambda$ and privacy parameter $\epsilon$, compared with undirected graphs from the non-private algorithm. The tuning parameter $\lambda$ corresponding to the subgraphs in the horizontal lines is 0.0015, 0.002, respectively. The first column of subgraphs corresponds to the non-private algorithm and  $\epsilon$ of the subgraphs corresponding to the last two columns is 1, 2, respectively.}
    \label{fig:4}
\end{figure*}

\section{Conclusion} \label{conclusion} 

In this paper, we have focused on developing differentially private algorithms for solving the ridge estimation of the precision matrix and the graphical lasso problem. We first present a differentially private ridge estimation for the precision matrix. Then based on this algorithm and ADMM algorithm, a differentially private algorithm for the graphical lasso problem has been proposed. We compare our differentially private algorithms with the non-private algorithms through different privacy budget and sample size under simulation and real data, which show that our developed differentially private algorithms provide fine utility.

\newpage

\end{document}